\documentclass{article}

\usepackage{arxiv}
\usepackage{natbib}

\usepackage{amssymb}
\usepackage{amsthm}
\usepackage{float}

\usepackage{lineno,hyperref}
\usepackage[utf8]{inputenc}
\usepackage{graphicx}
\usepackage{multicol}
\usepackage[most]{tcolorbox}
\usepackage[utf8]{inputenc}
\usepackage[english]{babel}
\usepackage{nccmath}
\usepackage{empheq} 
\usepackage{algorithm}
\usepackage{algpseudocode}
\usepackage{stmaryrd}
\algnewcommand{\Inputs}[1]{%
  \State \textbf{Inputs:}
  \Statex \hspace*{\algorithmicindent}\parbox[t]{.8\linewidth}{\raggedright #1}
}
\algnewcommand{\Initialize}[1]{%
  \State \textbf{Initialize}
  \Statex \hspace*{\algorithmicindent}\parbox[t]{.8\linewidth}{\raggedright #1}
}

\usepackage{amsthm}
\usepackage{caption}
\usepackage{array} 
\theoremstyle{definition}
\newtheorem{definition}{Definition}
\newtheorem{theorem}{Theorem}

\newtheorem{lemma}{Lemma}
\newtheorem{assumption}{Assumption}

\usepackage{subcaption}
\usepackage{placeins}

\def\nbR{\ensuremath{\mathrm{I\! R}}}
\usepackage{ulem}
\usepackage[siunitx, RPvoltages]{circuitikz}
\usetikzlibrary{arrows.meta, positioning}
\title{Distributionally Robust Geometric Joint Chance-Constrained Optimization: Neurodynamic Approaches}

\author{%
Ange Valli$^{1,}\thanks{corresponding author}$ \quad Siham Tassouli$^{2}$ \quad Abdel Lisser$^{1,3}$\\
$^1$Université Paris-Saclay, CNRS, CentraleSupélec, Laboratoire des signaux et systèmes\\
$^2$ENAC, OPTIM\\
$^3$Université Paris-Saclay, CNRS, CentraleSupélec, Fédération de Mathématiques de CentraleSupélec\\
\texttt{\{ange.valli,abdel.lisser\}@l2s.centralesupelec.fr}\\
\texttt{siham.tassouli@enac.fr}
}

\begin{document}

\maketitle

\begin{abstract}
This paper proposes a two-time scale neurodynamic duplex approach to solve
distributionally robust geometric joint chance-constrained optimization
problems. The probability distributions of the row vectors are not known in
advance and belong to a certain distributional uncertainty set. In our paper, we
study three uncertainty sets for the unknown distributions. The neurodynamic
duplex is designed based on three projection equations. The main contribution of
our work is to propose a neural network-based method to solve distributionally
robust joint chance-constrained optimization problems that converges in
probability to the global optimum without the use of standard state-of-the-art
solving methods. We show that neural networks can be used to solve multiple
instances of a problem. In the numerical experiments, we apply the proposed
approach to solve a problem of shape optimisation and a telecommunication
problem.
\end{abstract}

\keywords{Dynamical neural network \and Distributionally robust optimization \and Joint chance constraints \and Particle swarm optimization  \and Two-timescale system}

{\def\thefootnote{}\footnotetext{Websites: \textbf{Ange Valli:} \url{https://angevalli.github.io/}, \textbf{Siham Tassouli:} \url{https://optim.recherche.enac.fr/author/siham-tassouli/}, \textbf{Abdel Lisser:} \url{https://l2s.centralesupelec.fr/u/lisser-abdel/}}}

\section{Introduction}
\noindent Chance-constrained programming appears with the increased need to include uncertainty in complex decision-making models. It was introduced for the first time by Charnes \& Cooper \cite{charnes1959chance}. Since then, chance-constrained optimization has been widely studied, and the range of applications is very large, e.g., resource allocation in communication and network systems \cite{kandukuri2002optimal,Hadi:17}, information theory \cite{posada2020gait}, chemical engineering \cite{SONMEZ1999297}, computational finance \cite{kabanov2004geometric, ZHANG2012217}, metal cutting optimization \cite{Dupacov2010StochasticGP}, spatial gate sizing \cite{singh2008geometric}, profit maximization  \cite{kojic2018solving} and biochemical systems \cite{xu2014improved}, portfolio selection \cite{HAN20171189}, energy systems operations \cite{HUO2021107153}, water quality management \cite{dhar2006chance} and transportation problems \cite{sluijk2023chance}. In this paper, we study chance-constrained geometric programs. Liu et al. \cite{LIU2016687} propose some convex based approximations to come up with lower and upper bounds for geometric programs with joint probabilistic constraints when the stochastic parameters are normally distributed and pairwise independent. Shiraz et al. \cite{khanjani2017fuzzy} use a duality algorithm to solve fuzzy chance-constrained geometric programs. Tassouli \& Lisser \cite{TASSOULI2023765} propose a neurodynamic approach to solve geometric programs with joint probabilistic constraints with normally distributed coefficients and independent matrix row vectors. Liu et al. \cite{liu2022distributionally} propose convex approximation based algorithms to solve distributionally robust geometric programs with individual and joint chance constraints.

Geometric programming is a method for solving a class of nonlinear problems. It
is used to minimize functions that are in the form of posynomials subject to
constraints of the same type. It was introduced for the first time by Duffin et
al. \cite{Duffin67}. Since then geometric programming was employed to solve
several optimization problems, e.g, resource allocation in communication and
network systems \cite{kandukuri2002optimal,Hadi:17}, information theory
\cite{posada2020gait, muqattash2006performance}, chemical engineering
\cite{SONMEZ1999297}, computational finance \cite{kabanov2004geometric,
ZHANG2012217}, metal cutting optimization \cite{Dupacov2010StochasticGP},
spatial gate sizing \cite{singh2008geometric}, profit maximization
\cite{kojic2018solving} and biochemical systems \cite{xu2014improved}.

In this paper, we are interested in solving joint chance-constrained geometric
optimization problems. We study the case where the distribution of the random
parameters is unknown, aka distributionally robust optimization. In fact, we may
only know partial information about the statistical properties of the stochastic
parameters. El Ghaoui \& Lebret \cite{el1997robust} use second-order cone
programming to solve least-squares problems where the stochastic parameters are
not known but bounded. Bertsimas \& Sim \cite{bertsimas2004price} introduce a
less conservative approach to solve linear optimization problems with uncertain
data. Bertsimas \& Brown \cite{bertsimas2009constructing} propose a general
scheme for designing uncertainty sets for robust optimization. Wiesemann et al.
\cite{wiesemann2014distributionally} propose standardized ambiguity sets for
modeling and solving distributionally robust optimization problems. Peng et al.
\cite{peng2021games} study one density-based uncertainty set and four
two-moments based uncertainty sets to solve games with distributionally robust
joint chance constraints. Cheng et al. \cite{cheng2014distributionally} solve a
distributionally robust quadratic knapsack problem. Dou \& Anitescu
\cite{DOU2019294} propose a new ambiguity set tailored to unimodal and seemingly
symmetric distributions to deal with distributionally robust chance constraints.
Li \& Ke \cite{LI2019452} approximate a distributionally robust chance
constraint by the worst-case Conditional Value-at-Risk. Hanasusanto et al.
\cite{HANASUSANTO20166} approximate two-stage distributionally robust programs
with binary recourse decisions. Georghiou et al. \cite{georghiou2020primal}
propose a primal-dual lifting scheme for the solution of two-stage robust
optimization problems. Xia et al. \cite{XIA2023315} propose an efficient
solution approach to the Nash equilibrium of the distributionally robust
chance-constrained games. Yang \& Li \cite{yang2023distributionally} use
Sinkhorn ambiguity set to solve distributionally robust chance-constrained
optimization problems.

Recent papers have considered the use of distributionally robust approaches in
transportation network optimization problems \cite{dai2020distributionally},
multistage distribution system planning \cite{zare2018distributionally},
portfolio optimization problems \cite{fonseca2012robust,
wang2023distributionally}, planning and scheduling \cite{SHANG201853}, risk
measures \cite{postek2016computationally}, multimodal demand problems
\cite{hanasusanto2015distributionally}, design of hub location
\cite{zhao2023distributionally}, studying flexibility of networks
\cite{rayati2022distributionally}, appointment scheduling \cite{ZHANG2017139},
vehicle routine problems \cite{ghosal2020distributionally} and energy and
reserve dispatch \cite{ORDOUDIS2021291}.

The use of neural networks to solve optimization problems has been actively
studied since the 1980s when the idea was first introduced by Tank \& Hopfield
\cite{tank2003simple}. Xia \& Wang \cite{xia2004recurrent} present a recurrent
neural network for solving nonlinear convex programming problems subject to
nonlinear inequality constraints. Wang \cite{WANG1994629} proposes a
deterministic annealing neural network for convex programming. Nazemi \& Omedi
\cite{NAZEMI20131} presents a neural network model for solving the shortest path
problems. Tassouli \& Lisser \cite{TASSOULI2023765} propose a recurrent neural
network to solve geometric joint chance-constrained optimization problems. Wu \&
Lisser \cite{wu2023deep} proposes a physics-informed neural network to solve
multiple linear programs formulated as initial value problems. Flexible
architectures of neural networks have been developed to overcome the limitations
of standard feed-forward neural networks and vanilla recurrent neural networks:
physics-informed neural networks (PINNs) \cite{raissi2019physics}, conditional
recurrent neural networks (RNNs), long short-term memories (LSTMs), gated
recurrent units (GRUs) \cite{lauria2024conditional}. Conditional recurrent
neural networks were developed for computer vision applications
\cite{karpathy2015deep,vinyals2015show}. Recently, they found applications in
neuroscience \cite{driscoll2024flexible}, optics \cite{lauria2024conditional}
and big data \cite{fan2025conditional}, allowing to incorporate external
features independent in time, which condition the sequential data learned by the
model.

 In addition to the significant accomplishments achieved by individual recurrent
 neural networks, it is important to note that one-time-scale RNNs have
 limitations when it comes to constrained global optimization problems and more
 general problem domains. The dynamic behaviors of one-time-scale RNNs can
 exhibit drastic changes and become unpredictable when dealing with certain
 optimization problems. Neuroscience studies have provided evidence of the
 existence of two-time-scales in the brain, where different processes operate at
 different time scales \cite{spitmaan2020multiple}. Therefore, a neurodynamic
 model with two-time-scales is considered more biologically plausible for
 emulating brain functions than a model with only one-time-scale. Xia et al.
 \cite{xia2023two} prove that two-timescale RNNs converge faster than the
 one-timescale RNNs. This paper proposes a two-timescale duplex neurodynamic
 approach for distributionally robust joint chance-constrained optimization
 problems, which is formulated using a biconvex reformulation. Unlike other
 existing methods that give lower or upper bounds to this kind of problems, the
 proposed approach employs two recurrent neural networks that operate
 collaboratively at two different timescales and converge almost surely to a
 global optimal solution of the given distributionally robust optimization
 problem. 

 In this paper, we rely on the deterministic reformulations proposed in
 \cite{liu2022distributionally} and propose two-timescale neural network-based
 solutions without using any convex approximations or relaxations. We also rely
 on the one-time scale neural network proposed in \cite{TASSOULI2023765} to
 create our two-time scale neural network.

The main contributions of our work are threefold.
\begin{itemize}
    \item[(i)] On the formulation side, we reformulate two neurodynamic approaches to solve the equivalent deterministic problems of the initial robust programs.
    \item[(ii)] On the theoretical side, we show that 
    the proposed neurodynamic approaches are stable and convergent.
    \item[(iii)] On the numerical side, we show that the proposed neurodynamic methods cover well the risk area induced by the distributionally robust chance constraints.
\end{itemize}
The rest of the paper is organized as follows. In Section
\ref{sec:problem_statement}, we study two uncertainty sets to solve a
distributionally robust geometric chance-constrained optimization problem and
give four deterministic equivalent problems. In Section
\ref{sec:single_recurrent_neural_network}, we propose a recurrent neural network
to solve the first three resulting problems and prove its convergence and
stability. We consider in Section \ref{sec:two-time_scale_neurodynamic_duplex} a
duplex of two two-timescale recurrent neural networks to solve the last
deterministic problem and prove its convergence almost surely to the global
optimum. In Section \ref{sec:numerical_experiments}, we evaluate the performances of the proposed neurodynamic
approaches by solving a shape optimization problem and a telecommunication
problem.

\section{Problem statement and reformulation}
\label{sec:problem_statement}

\noindent The mathematical reformulations of the robust problems considered in this section are inspired from \cite{liu2022distributionally}, however, our theoretical results and algorithms are completely different.
\\
A general form of a geometric program is given as follows 
\begin{eqnarray}
&\min\limits_{t \in \mathbb{R}_{++}^M} &\sum\limits_{i=1}^{I_0} c_i^0 \prod_{j=1}^M t_j^{a_{ij}^0},  \label{P1}\\
&{\text{s.t}}& 
      \sum\limits_{i=1}^{I_k} c_i^k \prod_{j=1}^M t_j^{a_{ij}^k} \leq 1, k=1, ...., K, \notag
\end{eqnarray}
where  $c_i^k$, $i=1,...,I_k$, $k=1, ...., K$ are positive constants and the exponents $a_{ij}^k, i=1,...,I_k, j=1,...,M, k=1, ..., K$ are real constants.
\\
In this paper, we consider the case where the coefficients $c_i$ are not known. Consequently, we reformulate the optimization problem (\ref{P1}) as follows 
\begin{eqnarray}
&\min\limits_{t \in \mathbb{R}_{++}^M} &\sup\limits_{\mathcal{F}_0 \in \mathcal{D}_0 } \mathbb{E}_{\mathcal{F}_0}\left [\sum\limits_{i=1}^{I_0} c_i^0 \prod_{j=1}^M t_j^{a_{ij}^0}\right ],  \tag{$JCP$} \label{JCP}\\
&{\text{s.t}}& 
          \inf\limits_{\mathcal{F} \in \mathcal{D} }\mathbb{P}_{\mathcal{F}}\left (\sum\limits_{i=1}^{I_k} c_i^k \prod_{j=1}^M t_j^{a_{ij}^k} \leq 1, k=1, ...., K \right) \geq 1-\epsilon,  \notag
\end{eqnarray}
where $\mathcal{F}_0$ is the probabilistic distribution of vector $C^0 = (c_1^0, .., c_{I_0}^0)^T$,  $\mathcal{F}$ is the joint distribution for $C^1 = (c_1^1, .., c_{I_1}^1)^T$, ...,  $C^k = (c_1^k, .., c_{I_k}^k)^T$, $\mathcal{D}_0$ is the uncertainty set for the probability distribution $\mathcal{F}_0$, $\mathcal{D}$ is the uncertainty set for the probability distribution $\mathcal{F}$ and $1-\epsilon$, $\epsilon \in (0, 0.5] $, is the confidence parameter for the joint constraint.
\\
This paper considers the distributionally robust geometric programs (\ref{JCP})
using two different sets of uncertainty. The first set focuses on uncertainties
in distributions, considering both known and unknown first two order moments.
The second set incorporates first-order moments along with nonnegative support
constraints.

\subsection{Uncertainty Sets with First Two Order Moments}
\noindent We first consider that the mean vector of $C^k$, $k=1,...,K$ lies in
an ellipsoid of size $\gamma_{1}^k \geq 0$ with center $\mu_k$ and that the
covariance matrix of  $C^k$, $k=1,...,K$ lies in a positive semidefinite cone of
center  {$\Sigma_k = \left\{ \sigma_{i,j}^k, i = 1, ..., I_k, i=1, ..., M
\right\}$}. We define for every $k=1,...,K$, $\mathcal{D}^2_k(\mu_k, \Sigma_k)=
\left\{\mathcal{F}_k \left|
\begin{array}{rcl} 
  (\mathbb{E}_{\mathcal{F}_k}[{C^k}]-\mu_k)^T \Sigma_k^{-1}  (\mathbb{E}_{\mathcal{F}_k}[{C^k}]-\mu_k)\leq \gamma_1^k\\
  \text{COV}_{\mathcal{F}_k}({C^k}) \preceq \gamma_2^k\Sigma_k\
\end{array}\right.
\right\}$, where $\mathcal{F}_k$ is the probability distribution of $C^k$, $\gamma_2^k \geq 0$ and $\text{COV}_{\mathcal{F}_k}$ is a covariance operator under probability distribution ${\mathcal{F}_k}$ of $C^k$.\\
\\
Based on whether the row vectors $C^k$, $k=1,..,K$ are mutually independent or dependent, we have two cases.

\subsubsection{Case \texorpdfstring{(\ref{JCP})}\ \ with Jointly Independent Row Vectors}
\begin{assumption}
\label{assmp1}
We assume that $\mathcal{D} = \left\{ \mathcal{F} |  \mathcal{F} = \mathcal{F}_1 \mathcal{F}_2 ...  \mathcal{F}_K \right\}$, where $\mathcal{F}$ is the joint distribution for mutually independent random vectors $C^1$, $C^2$, ..., $C^K$ with marginals $\mathcal{F}_1$, $\mathcal{F}_2 $, ..., $\mathcal{F}_K$. 
\end{assumption}
 \begin{theorem}
Given Assumption \ref{assmp1}, (\ref{JCP}) is equivalent to
\begin{align} 
& \min \limits_{t \in \mathbb{R}_{++}^M, y \in \mathbb{R}_{+}^K } &&  \sum_{i =1}^{I_0} \mu_i^0 \prod_{j=1}^M t_j^{a_{ij}^0} + \sqrt{\gamma_1^0} \sqrt{\sum\limits_{i =1}^{ I_0}\sum\limits_{l =1}^{I_0} \sigma_{i,l}^0 \prod_{j=1}^M t_j^{a_{ij}^0+a_{lj}^0}} \tag{$JCP_{ind}$} \label{JCP_ind}\\
&\text{s.t} &&\sum_{i =1}^{I_k} \mu_i^k \prod_{j=1}^M t_j^{a_{ij}^k}+\sqrt{\gamma_1^k} \sqrt{\sum\limits_{i =1}^{ I_k}\sum\limits_{l =1}^{I_k} \sigma_{i,l}^k \prod_{j=1}^M t_j^{a_{ij}^k+a_{lj}^k}} \notag\\
& &&+ \sqrt{\frac{y_k}{1-y_k}}\sqrt{\gamma_2^k} \sqrt{\sum\limits_{i =1}^{ I_k}\sum\limits_{l =1}^{I_k} \sigma_{i,l}^k \prod_{j=1}^M t_j^{a_{ij}^k+a_{lj}^k}} \leq 1 \quad \forall k=1,..., K \notag\\
& &&\prod_{k=1}^K y_k \geq 1-\epsilon, \quad 0 < y_k \leq 1 \quad \forall k=1,...,K \notag
\end{align}   
\end{theorem}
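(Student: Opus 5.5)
Our plan is to handle the objective and the joint chance constraint separately, and to reduce each to a worst-case moment computation over the set $\mathcal{D}^2_k(\mu_k,\Sigma_k)$.

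\emph{Objective.} Fix $t$ and set $x^0(t)=\bigl(\prod_{j} t_j^{a_{1j}^0},\dots,\prod_j t_j^{a_{I_0 j}^0}\bigr)^T$, which is componentwise positive. The objective is linear in $C^0$, so its expectation equals $\mathbb{E}_{\mathcal{F}_0}[C^0]^T x^0(t)$, and only the mean constraint of $\mathcal{D}_0=\mathcal{D}^2_0(\mu_0,\Sigma_0)$ is active. We will therefore maximize a linear function over the ellipsoid $\{m : (m-\mu_0)^T\Sigma_0^{-1}(m-\mu_0)\le\gamma_1^0\}$. By Cauchy--Schwarz in the $\Sigma_0^{-1}$ inner product, the maximum is $\mu_0^T x^0 + \sqrt{\gamma_1^0}\sqrt{x^{0T}\Sigma_0 x^0}$. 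It is attained, for example, by a Dirac or Gaussian distribution with that mean and covariance $\preceq\gamma_2^0\Sigma_0$, so the supremum is achieved. Expanding $x^{0T}\Sigma_0x^0$ gives exactly the double sum with exponents $a_{ij}^0+a_{lj}^0$.

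\emph{Chance constraint.} Under Assumption \ref{assmp1}, $\mathcal{F}=\mathcal{F}_1\cdots\mathcal{F}_K$ and the marginals range independently over the sets $\mathcal{D}^2_k$. Hence the joint probability factors as $\prod_k \mathbb{P}_{\mathcal{F}_k}(C^{kT}x^k(t)\le 1)$, and the infimum of the product equals the product of the infima, since each factor is nonnegative and the factors are chosen independently. Introducing auxiliary variables $y_k$, the constraint $\prod_k \inf_{\mathcal{F}_k}\mathbb{P}(\cdot)\ge 1-\epsilon$ becomes equivalent to the existence of $y_k\in(0,1]$ with $\prod_k y_k\ge1-\epsilon$ and $\inf_{\mathcal{F}_k\in\mathcal{D}^2_k}\mathbb{P}(C^{kT}x^k\le1)\ge y_k$ for every $k$. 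For one direction, take $y_k$ equal to the infima themselves. For the other, use monotonicity of the product.

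\emph{Individual constraints.} The main step is the individual worst-case constraint. We will invoke the one-sided Chebyshev (Cantelli) bound, in the distributionally robust form of El Ghaoui et al. and Delage--Ye. For a fixed mean $m$ and covariance $S$, $\inf_{\mathcal{F}}\mathbb{P}(\xi^Tx\le1)\ge y$ holds iff $m^Tx+\sqrt{y/(1-y)}\sqrt{x^TSx}\le1$, and the bound is tight. Taking the worst case over the moment set, the $\sqrt{x^TSx}$ term is maximized at $S=\gamma_2^k\Sigma_k$ by Loewner monotonicity. The $m^Tx$ term is maximized over the ellipsoid exactly as for the objective, giving $\mu_k^Tx^k+\sqrt{\gamma_1^k}\sqrt{x^{kT}\Sigma_kx^k}$. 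The two maximizations decouple because mean and covariance constraints are separate in $\mathcal{D}^2_k$. This yields the $k$-th constraint of (\ref{JCP_ind}); the case $y_k=1$ corresponds to $\sqrt{y_k/(1-y_k)}=+\infty$ and needs a limiting convention.

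The main obstacle is verifying tightness of the Cantelli bound, that is, that the infimum over distributions with the given moments is exactly $1/(1+\kappa^2)$ and not merely bounded by it. For this we will cite the known extremal two-point distribution, following \cite{liu2022distributionally}, rather than rederive it. We also need to check that $\epsilon\le0.5$ keeps $y_k\ge1/2$, which ensures the relevant regime of the Cantelli equivalence.
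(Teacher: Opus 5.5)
Your proposal is correct and follows the same route as the paper. You use independence to factor the joint constraint into individual worst-case constraints with auxiliary $y_k\in(0,1]$ satisfying $\prod_k y_k\ge 1-\epsilon$, and then reformulate the objective and each individual constraint; the paper simply cites Theorem 1 of \cite{liu2022distributionally} for that last step, whereas you unpack it via Cauchy--Schwarz over the mean ellipsoid, Loewner monotonicity in the covariance, and tightness of Cantelli's bound. One minor remark: your planned check that $y_k\ge 1/2$ is unnecessary, because the Cantelli-based equivalence $\inf_{\mathcal{F}}\mathbb{P}(\xi^Tx\le 1)\ge y \iff m^Tx+\sqrt{y/(1-y)}\sqrt{x^TSx}\le 1$ holds for every $y\in(0,1)$, so $\epsilon\le 0.5$ plays no role in this equivalence.
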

\begin{proof}
As the row vectors $C^k$, $k=1,..., K$ are mutually independent, (\ref{JCP}) is written equivalently by introducing $K$ nonegative auxiliary variables $y_k$ as \cite{TASSOULI2023765}.
\begin{eqnarray}
&\min\limits_{t \in \mathbb{R}_{++}^M}   &  \sup\limits_{\mathcal{F}_0 \in \mathcal{D}_0 } \mathbb{E}_{\mathcal{F}_0}\left [\sum\limits_{i=1}^{I_0} c_i^0 \prod_{j=1}^M t_j^{a_{ij}^0}\right ],  \notag\\
&{\text{s.t}}& 
          \inf\limits_{\mathcal{F}_k \in \mathcal{D}_k }\mathbb{P}_{\mathcal{F}_k}\left (\sum\limits_{i=1}^{I_k} c_i^k \prod_{j=1}^M t_j^{a_{ij}^k} \leq 1 \right) \geq y_k, \forall k=1, ...., K,  \notag \\
& & \prod_{k=1}^K y_k \geq 1-\epsilon, \text{ }  0 < y_k \leq 1 \text{ , } k=1,...,K. \notag
\end{eqnarray}
\\
By Theorem 1 in \cite{liu2022distributionally}, we conclude that (\ref{JCP}) is equivalent to (\ref{JCP_ind}).
\end{proof}
\noindent Problem (\ref{JCP_ind}) is not convex. By applying the logarithmic transformation $r_j = log(t_j)$, $j=1,...,M$ and $x_k = \text{log}(y_k)$, $k=1, ..., K$, we have the following equivalent reformulation of  
(\ref{JCP_ind})
\begin{eqnarray} 
&\min \limits_{r \in \mathbb{R}^M, x \in \mathbb{R}^K } &  \sum_{i =1}^{I_0} \mu_i^0 \text{exp}\left\{\sum_{j=1}^M {a_{ij}^0}r_j\right\} + \sqrt{\gamma_1^0} \sqrt{\sum\limits_{i =1}^{ I_0}\sum\limits_{l =1}^{I_0} \sigma_{i,l}^0  \text{exp} \left\{\sum_{j=1}^M (a_{ij}^0+a_{lj}^0)r_j\right\}} \tag{$JCP_{ind}^{log}$} \label{JCP_ind^log} \\
&\text{s.t}& 
        \sum_{i =1}^{I_k} \mu_i^k  \text{exp}\left\{\sum_{j=1}^M {a_{ij}^k}r_j\right\}+\sqrt{\gamma_1^k}  \sqrt{\sum\limits_{i =1}^{ I_k}\sum\limits_{l =1}^{I_k} \sigma_{i,l}^k  \text{exp} \left\{\sum_{j=1}^M (a_{ij}^k+a_{lj}^k)r_j\right\}} \notag\\
        & & + \sqrt{\gamma_2^k} \sqrt{\sum\limits_{i =1}^{ I_k}\sum\limits_{l =1}^{I_k} \sigma_{i,l}^k  \text{exp} \left\{\sum_{j=1}^M (a_{ij}^k+a_{lj}^k)r_j+\text{log}\left(\frac{e^{x_k}}{1-e^{x_k}} \right)\right\}} \leq 1 \quad \forall k=1,..., K \notag\\
&  &\sum_{k=1}^K x_k \geq \text{log}(1-\epsilon), \quad x_k \leq 0 \quad \forall k=1,...,K \notag
\end{eqnarray}   

\begin{theorem} \cite{liu2022distributionally}
   If $\sigma_{i,l}^k \geq 0$ for all $i$, $l$ and $k$, problem (\ref{JCP_ind^log}) is a convex programming problem.
\end{theorem}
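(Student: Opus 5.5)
The plan is to check convexity term by term. The feasible set will be written as the intersection of sublevel sets $\{g_k \le 1\}$ of convex functions $g_k$ in $(r,x)$, together with linear constraints. The objective will be shown convex in $r$.

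The linear pieces are easy. The constraints $\sum_k x_k \ge \log(1-\epsilon)$ and $x_k \le 0$ are linear. Each term $\mu_i^k \exp\{\sum_j a_{ij}^k r_j\}$ is convex as the exponential of an affine function, provided $\mu_i^k \ge 0$. This holds since the coefficients $c_i^k$ are positive, so their means are nonnegative; I would state this as part of the standing setting.

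For the square-root terms, the key tool is that a function of the form
\[
h(z)=\sqrt{\textstyle\sum_{i,l} \sigma_{i,l}\exp\{\alpha_{il}^T z\}}
\]
with $\sigma_{i,l}\ge 0$ is convex. I would write
\[
h(z)=\exp\Big\{\tfrac12 \log \sum_{i,l}\sigma_{i,l}e^{\alpha_{il}^T z}\Big\}.
\]
Dropping the terms with $\sigma_{i,l}=0$, the inner function is a log-sum-exp of affine functions, because $\sigma_{i,l}e^{\alpha^Tz}=e^{\alpha^Tz+\log\sigma_{i,l}}$. Log-sum-exp is convex, and composing it with the increasing convex function $u\mapsto e^{u/2}$ keeps it convex. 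This is exactly where the hypothesis $\sigma_{i,l}^k\ge 0$ is needed: without it the argument of the log could fail to be a positive combination of exponentials. Applied with $z=r$ and exponents $a_{ij}^k+a_{lj}^k$, this gives convexity of the $\sqrt{\gamma_1^k}$ terms in the constraints and of the $\sqrt{\gamma_1^0}$ term in the objective.

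The main obstacle is the third term, which involves $x_k$ through $\phi(x)=\log\big(e^{x}/(1-e^{x})\big)=x-\log(1-e^x)$ for $x<0$. The exponent $\sum_j(a_{ij}^k+a_{lj}^k)r_j+\phi(x_k)$ is no longer affine. I would first show that $\phi$ is convex: its second derivative is $e^x/(1-e^x)^2>0$. The term then factors as
\[
\sqrt{\gamma_2^k}\,\exp\Big\{\tfrac12\Big[\log\sum_{i,l}\sigma_{i,l}^k e^{(a_{i\cdot}^k+a_{l\cdot}^k)^T r}+\phi(x_k)\Big]\Big\}.
\]
Inside the bracket, the first summand is convex in $r$ by the previous step and $\phi(x_k)$ is convex in $x_k$. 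Their sum is convex jointly in $(r,x_k)$, and applying the increasing convex function $e^{u/2}$ preserves convexity. At $x_k=0$ the term equals $+\infty$, which is consistent with an extended-value convex function and does not affect the sublevel-set argument; equivalently, one can work on $x_k<0$.

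Finally, summing nonnegative multiples of convex functions shows that each constraint function $g_k(r,x)$ is convex and the objective is convex. Sublevel sets of convex functions are convex, and intersecting them with the polyhedral set defined by the $x$-constraints gives a convex feasible region. Hence (\ref{JCP_ind^log}) is a convex program.
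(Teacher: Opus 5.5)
The paper gives no proof of this statement. It is quoted from \cite{liu2022distributionally}, so there is no internal argument to compare against, and your proposal stands as a correct, self-contained proof.

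The key device is exactly right. You write each square-root term as $\exp\{\frac{1}{2}(\log\sum_{i,l}\sigma^k_{i,l}e^{(a^k_{i\cdot}+a^k_{l\cdot})^T r}+\phi(x_k))\}$, where $\phi(x)=x-\log(1-e^{x})$ is convex on $x<0$ because $\phi''(x)=e^{x}/(1-e^{x})^{2}>0$. This uses only entrywise nonnegativity of the $\sigma^k_{i,l}$, not positive semidefiniteness of $\Sigma_k$, which matches the stated hypothesis.

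One correction concerns your justification of $\mu^k_i\ge 0$. In $\mathcal{D}^2_k$ the vector $\mu_k$ is only the centre of the ellipsoid that contains the mean, not the mean of $C^k$. Moreover, $\mathcal{D}^2_k$ imposes no support condition, so nonnegativity of $\mu_k$ does not follow from positivity of the $c^k_i$. It is a genuine hypothesis that the statement leaves implicit, and it cannot be dropped: already $\{r : e^{r_1}-e^{r_2}\le 1\}$, corresponding to $\mu^k=(1,-1)$, is not convex. Your fallback of stating it explicitly as part of the standing setting is therefore the right one.
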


\subsubsection{Case \texorpdfstring{(\ref{JCP})}\ \ with Jointly Dependent Row Vectors.}
\noindent In this case, (\ref{JCP}) is equivalent to  \cite{liu2022distributionally}

\begin{eqnarray} 
&\min \limits_{t \in \mathbb{R}_{++}^M, y \in \mathbb{R}_{+}^K } &  \sum_{i =1}^{I_0} \mu_i^0 \prod_{j=1}^M t_j^{a_{ij}^0} + \sqrt{\gamma_1^0} \sqrt{\sum\limits_{i =1}^{ I_0}\sum\limits_{l =1}^{I_0} \sigma_{i,l}^0 \prod_{j=1}^M t_j^{a_{ij}^0+a_{lj}^0}} \tag{$JCP_{dep}$} \label{JCP_dep}\\
&\text{s.t}&
        \sum_{i=1}^{I_k} \mu_i^k \prod_{j=1}^M t_j^{a_{ij}^k}+\sqrt{\gamma_1^k} \sqrt{\sum\limits_{i=1}^{ I_k}\sum\limits_{l=1}^{I_k} \sigma_{i,l}^k \prod_{j=1}^M t_j^{a_{ij}^k+a_{lj}^k}} \notag\\
        & & + \sqrt{\frac{y_k}{1-y_k}}\sqrt{\gamma_2^k} \sqrt{\sum\limits_{i=1}^{ I_k}\sum\limits_{l=1}^{I_k} \sigma_{i,l}^k \prod_{j=1}^M t_j^{a_{ij}^k+a_{lj}^k}} \leq 1 \quad \forall k=1,..., K \notag\\
&  &\sum_{k=1}^K y_k \geq K-\epsilon, \text{ }  0 < y_k \leq 1 \quad \forall k=1,...,K \notag
\end{eqnarray} 
As for the independent case, we obtain using the same logarithmic transformation the following biconvex equivalent problem for (\ref{JCP_dep})

\begin{eqnarray} 
& \min \limits_{r \in \mathbb{R}^M, x \in \mathbb{R}^K } &  \sum_{i =1}^{I_0} \mu_i^0 \text{exp}\left\{\sum_{j=1}^M {a_{ij}^0}r_j\right\} + \sqrt{\gamma_1^0} \sqrt{\sum\limits_{i =1}^{ I_0}\sum\limits_{l =1}^{I_0} \sigma_{i,l}^0  \text{exp} \left\{\sum_{j=1}^M (a_{ij}^0+a_{lj}^0)r_j\right\}} \tag{$JCP_{dep}^{log}$} \label{JCP_dep^log} \\
& \text{s.t}& 
        \sum_{i =1}^{I_k} \mu_i^k  \text{exp}\left\{\sum_{j=1}^M {a_{ij}^k}r_j\right\}+\sqrt{\gamma_1^k}  \sqrt{\sum\limits_{i =1}^{ I_k}\sum\limits_{l =1}^{I_k} \sigma_{i,l}^k  \text{exp} \left\{\sum_{j=1}^M (a_{ij}^k+a_{lj}^k)r_j\right\}} \notag\\
        & & + \sqrt{\gamma_2^k} \sqrt{\sum\limits_{i =1}^{ I_k}\sum\limits_{l =1}^{I_k} \sigma_{i,l}^k  \text{exp} \left\{\sum_{j=1}^M (a_{ij}^k+a_{lj}^k)r_j+\text{log} \left(\frac{y_k}{1-y_k}\right)\right\}} \leq 1 \quad \forall k=1,..., K \notag\\
&  &\sum_{k=1}^K y_k \geq K-\epsilon \quad 0 < y_k \leq 1 \quad \forall k=1,...,K \notag
\end{eqnarray}   
\begin{theorem} \cite{liu2022distributionally}
    If $\epsilon \leq 0.5$ and $\sigma_{i,l}^k \geq 0$ for all $i$, $l$ and $k$, problem (\ref{JCP_dep^log}) is a convex programming problem.
\end{theorem}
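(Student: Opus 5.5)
The plan is to show that, after restricting to an explicit convex box implied by the constraints, the objective and every constraint function of (\ref{JCP_dep^log}) are convex in $(r,y)$, while the coupling constraint on $y$ is linear. The one non-standard step is convexity of $\log\left(\frac{y_k}{1-y_k}\right)$, and this is exactly where $\epsilon \leq 0.5$ is needed.

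\textbf{Step 1 (the $r$-terms).} Each term $\mu_i^k \exp\{\sum_j a_{ij}^k r_j\}$ is the exponential of an affine function with a nonnegative weight, so it is convex. (I assume the $\mu_i^k$ are nonnegative, as for the positive coefficients of a posynomial.) For the square-root terms, I drop the pairs $(i,l)$ with $\sigma_{i,l}^k = 0$ and write
\[
\sqrt{\sum_{i,l} \sigma_{i,l}^k \exp\Big\{\sum_j (a_{ij}^k+a_{lj}^k) r_j\Big\}} \;=\; \exp\Big\{\tfrac12 \log \sum_{i,l} \exp\Big\{\log\sigma_{i,l}^k + \sum_j (a_{ij}^k+a_{lj}^k) r_j\Big\}\Big\}.
\]
Since $\sigma_{i,l}^k \geq 0$, the log-sum-exp of affine functions is convex. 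Composing with the convex nondecreasing map $s\mapsto e^{s/2}$ keeps it convex. Multiplying by $\sqrt{\gamma_1^k}\geq 0$ preserves convexity. Hence the objective and the first two parts of every constraint are convex in $r$. If all $\sigma_{i,l}^k$ vanish, the term is identically zero, which is trivially convex.

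\textbf{Step 2 (the $y$-coupled term), the main obstacle.} The third term has the same form, except that each exponent also contains $\phi(y_k) = \log y_k - \log(1-y_k)$. Since $\phi''(y) = -1/y^2 + 1/(1-y)^2$, the function $\phi$ is convex exactly on $[1/2,1)$. So I first show that the feasible set lies in that region. From $\sum_k y_k \geq K-\epsilon$ and $y_{k'} \leq 1$ for $k'\neq k$, we get $y_k \geq K-\epsilon-(K-1) = 1-\epsilon \geq 1/2$, using $\epsilon\leq 0.5$. Therefore the problem is unchanged if we add the convex box constraints $1-\epsilon \leq y_k \leq 1$.

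On this box, each exponent $\log\sigma_{i,l}^k + \sum_j (a_{ij}^k+a_{lj}^k) r_j + \phi(y_k)$ is jointly convex in $(r,y_k)$. Log-sum-exp is convex and nondecreasing in each argument, so composing it with these convex exponents gives a convex function. Applying $e^{s/2}$ as in Step 1 then shows the third term is jointly convex. At $y_k=1$ the term is $+\infty$ whenever some $\sigma_{i,l}^k>0$; I treat it as an extended-valued convex function, so that point is simply infeasible and convexity is unaffected.

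\textbf{Step 3 (conclusion).} Each constraint $g_k(r,y)\leq 1$ is a sublevel set of a convex function, hence convex. The set $\{\sum_k y_k \geq K-\epsilon,\ 1-\epsilon\leq y_k\leq 1\}$ is polyhedral. The intersection of these sets is convex, and by Step 1 the objective is convex in $r$. Hence (\ref{JCP_dep^log}) is a convex program.
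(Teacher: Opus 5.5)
Your proof is correct and follows essentially the argument behind this result. The paper gives no proof of its own and defers to \cite{liu2022distributionally}, where convexity likewise comes from two facts: the bound $y_k \ge 1-\epsilon \ge 1/2$, forced by $\sum_k y_k \ge K-\epsilon$, makes $\log(y_k/(1-y_k))$ convex, and the $\sigma$-weighted square-root terms are convex as compositions of the convex, nondecreasing log-sum-exp and exponential maps. You are also right to state the tacit assumption $\mu_i^k \ge 0$: the theorem omits it, but the terms $\mu_i^k \exp\{\cdot\}$ are not convex without it.
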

\subsection{Uncertainty Sets with Known First Order Moment and Nonnegative Support}
\noindent In this section, we consider uncertainty sets with nonnegative supports and known first-order moments. The uncertainty sets for (\ref{JCP}) can be formulated as follows
\begin{align*}
\mathcal{D}^3_k(\mu_k, \Sigma_k)=  \biggl \{\mathcal{F}_k \bigg|
\begin{array}{rcl} 
  \mathbb{E}[C^k] = \mu^k\\
  \mathbb{P}_{\mathcal{F}_k}[C^k \geq 0] = 1
\end{array}
 \biggl \} \quad \forall k=1,...,K
\end{align*}
where $\mu^k > 0$.
\subsubsection{Case \texorpdfstring{(\ref{JCP})}\ \ with Jointly Independent Row Vectors.}
\noindent We first consider the case when the marginal distributions in the uncertainty set are jointly independent. Using the strong duality \cite{liu2022distributionally}, (\ref{JCP}) can be reformulated as follows \cite{peng2022bounds}
\\
\begin{eqnarray}
&\min\limits_{t \in \mathbb{R}_{++}^M, \lambda, \beta, \pi}   &    \sum\limits_{i=1}^{I_0} \mu_i^0 \prod_{j=1}^M t_j^{a_{ij}^0} \tag{$JCP_{NS}^{ind}$} \label{JCP_NS^ind}\\
&{\text{s.t}}& \prod_{k=1}^K y_k \geq 1-\epsilon \quad \forall y_k \in [0, 1] \quad \forall k=1,...,K \notag\\
  & &       y_k \lambda_k^{-1} - \lambda_k^{-1}{\beta^k}^T\mu^k \leq 1 \quad \forall k=1, ...., K \notag\\
         & & \beta_k \leq 0, 0 < \lambda \leq 1 \quad \forall  k=1, ...., K \notag\\
         & & \lambda_k^{-1} \pi_k \geq 1 \quad \forall k=1, ...., K \notag\\
         & & (-\beta_k)^{-1} \pi_k  \prod_{j=1}^M t_j^{a_{ij}^k} \leq 1 \quad \forall i=1,...,I_k \quad \forall k=1, ...., K \notag
\end{eqnarray}
\\
(\ref{JCP}) can be reformulated as a convex problem using a logarithmic transformation $x_j = \text{log}(y_j)$, $t_j = \text{log}(r_j)$, $\Tilde{\lambda_k} = \text{log}(\lambda_k)$, $\Tilde{\beta_k} = \text{log}(-\beta_k)$, $\Tilde{\pi} = \text{log}(\pi)$. Problem (\ref{JCP_NS^ind}) becomes,
\begin{eqnarray}
&\min\limits_{x, r, \Tilde{\lambda}, \Tilde{\beta}, \Tilde{\pi}}  &    \sum\limits_{i=1}^{I_0} \mu_i^0 \text{exp}\left\{\sum_{j=1}^M a_{ij}^0 r_j\right\} \tag{$JCP_{NS-ind}^{log}$} \label{JCP_NS-ind^log} \\
& {\text{s.t}}& \sum\limits_{k=1}^K x_k \geq \text{log}(1-\epsilon) \quad \forall x_k \leq 0 \quad \forall k=1,...,K \notag\\
   & &      \text{exp}(x_k-\Tilde{\lambda}_k) + \sum_{i=1}^{I_k} \text{exp}\left\{-\Tilde{\lambda}_k+\Tilde{\beta}_i^k+\text{log}{\mu_i^k} \right\}\leq 1 \quad \forall k=1, ...., K \notag\\
         & & \Tilde{\lambda}_k \leq 0 \quad \forall k=1, ...., K \notag\\
         & & \Tilde{\lambda}_k \leq \Tilde{\pi}_k \quad \forall k=1, ...., K \notag\\
         & & \Tilde{\pi}_k +  \sum_{j=1}^M a_{ij}^kr_j - \Tilde{\beta}_i^k \leq 0, \quad \forall i=1,...,I_k \quad \forall k=1, ...., K \notag
\end{eqnarray}
\subsubsection{Case \texorpdfstring{(\ref{JCP})}\ \ with Jointly Dependent Row Vectors.}
\noindent In the case where the constraints of (\ref{JCP}) are jointly dependent, we have the following deterministic equivalent 
\begin{eqnarray}
& \min\limits_{t \in \mathbb{R}_{++}^M, \lambda, \beta, \pi}   &    \sum\limits_{i=1}^{I_0} \mu_i^0 \prod_{j=1}^M t_j^{a_{ij}^0}, \tag{$JCP_{NS-ind}^{log}$} \label{JCP_NS^dep}\\
&{\text{s.t}}& \sum_{k=1}^K y_k \geq K-\epsilon, \text{ } 0 \leq y_k \leq 1 \text{ , }  k=1,...,K, \notag\\
  & &       y_k \lambda_k^{-1} - \lambda_k^{-1}{\beta^k}^T\mu^k \leq1, k=1, ...., K, \notag\\
         & & \beta_k \leq 0, 0 < \lambda \leq 1, k=1, ...., K, \notag\\
         & & \lambda_k^{-1} \pi_k \geq 1,  k=1, ...., K, \notag\\
         & & (-\beta_k)^{-1} \pi_k  \prod_{j=1}^M t_j^{a_{ij}^k} \leq 1, i=1,...,I_k,  k=1, ...., K, \notag
\end{eqnarray}
We apply a log transformation to convert (\ref{JCP_NS^dep}) into a biconvex problem. We take $t_j = \text{log}(r_j)$, $\Tilde{\lambda_k} = \text{log}(\lambda_k)$, $\Tilde{\beta_k} = \text{log}(-\beta_k)$, $\Tilde{\pi} = \text{log}(\pi)$ and obtain
\begin{eqnarray}
&\min\limits_{x, r, \Tilde{\lambda}, \Tilde{\beta}, \Tilde{\pi}} & \sum\limits_{i=1}^{I_0} \mu_i^0 \text{exp}\left\{\sum_{j=1}^M a_{ij}^0 r_j\right\} \tag{$JCP_{NS-dep}^{log}$} \label{JCP_NS-dep^log}\\
& {\text{s.t}}& \sum_{k=1}^K y_k \geq K-\epsilon \quad \forall y_k \in [0,1] \quad \forall k=1,...,K \notag\\
   & & y_k \text{exp}(-\Tilde{\lambda}_k) + \sum_{i=1}^{I_k} \text{exp}\left\{-\Tilde{\lambda}_k+\Tilde{\beta}_i^k+\text{log}{\mu_i^k} \right\}\leq 1 \quad \forall k=1, ...., K \notag\\
         & & \Tilde{\lambda}_k \leq 0, \quad \forall k=1, ...., K \notag\\
         & & \Tilde{\lambda}_k \leq \Tilde{\pi}_k \quad \forall k=1, ...., K \notag\\
         & & \Tilde{\pi}_k +  \sum_{j=1}^M a_{ij}^kr_j - \Tilde{\beta}_i^k \leq 0 \quad \forall i=1,...,I_k \quad \forall k=1, ...., K \notag
\end{eqnarray}
\section{A dynamical recurrent neural network for
\texorpdfstring{(\ref{JCP_dep^log}), (\ref{JCP_ind^log})} \ \ and
(\texorpdfstring{\ref{JCP_NS-ind^log}} \ )}
\label{sec:single_recurrent_neural_network}

\noindent The proposed neural network
algorithm in this section is an important extension of the one in Tassouli \&
Lisser \cite{TASSOULI2023765}.\\
\\
Observe that (\ref{JCP_dep^log}), (\ref{JCP_ind^log}) and (\ref{JCP_NS-ind^log}) can be written in the following general form 
\begin{eqnarray}
&\min\limits_{z} & f(z), \label{eq:min_f}  \\
&{\rm s.t.}&  g(z) \leq 0, \notag
\end{eqnarray}
where $f$ and $g$ are two convex functions.
\\
For (\ref{JCP_ind^log}), $z=(r,x)^T$, $f(z) = \sum_{i =1}^{I_0} \mu_i^0 \text{exp}\left\{\sum_{j=1}^M {a_{ij}^0}r_j\right\} + \sqrt{\gamma_1^0} \sqrt{\sum\limits_{i =1}^{ I_0}\sum\limits_{l =1}^{I_0} \sigma_{i,l}^0  \text{exp} \left\{\sum_{j=1}^M (a_{ij}^0+a_{lj}^0)r_j\right\}}$ and 
\begin{align*}
g(z) = \left( \begin{array}{c}
  \sum\limits_{i =1}^{I_1} \mu_i^1  \text{exp}\left\{\sum_{j=1}^M {a_{ij}^1}r_j\right\}+\sqrt{\gamma_1^1}  \sqrt{\sum\limits_{i =1}^{ I_1}\sum\limits_{l =1}^{I_1} \sigma_{i,l}^1  \text{exp} \left\{\sum\limits_{j=1}^M (a_{ij}^1+a_{lj}^1)r_j\right\}} \\
  + \sqrt{\frac{e^{x_1}}{1-e^{x_1}}}\sqrt{\gamma_2^1} \sqrt{\sum\limits_{i =1}^{ I_1}\sum\limits_{l =1}^{I_1} \sigma_{i,l}^1  \text{exp} \left\{\sum\limits_{j=1}^M (a_{ij}^1+a_{lj}^1)r_j\right\}} -1\\
  \vdots \\
    \sum\limits_{i=1}^{I_K} \mu_i^K  \text{exp}\left\{\sum\limits_{j=1}^M {a_{ij}^K}r_j\right\}+\sqrt{\gamma_1^K}  \sqrt{\sum\limits_{i =1}^{ I_K}\sum\limits_{l =1}^{I_K} \sigma_{i,l}^K  \text{exp} \left\{\sum_{j=1}^M (a_{ij}^K+a_{lj}^K)r_j\right\}}+\\ \sqrt{\frac{e^{x_K}}{1-e^{x_K}}}\sqrt{\gamma_2^K} \sqrt{\sum\limits_{i =1}^{ I_K}\sum\limits_{l =1}^{I_K} \sigma_{i,l}^K  \text{exp} \left\{\sum\limits_{j=1}^M (a_{ij}^K+a_{lj}^K)r_j\right\}} -1\\
     \text{log}(1-\epsilon) - \sum_{k=1}^K x_k\\
     x_1\\
     \vdots\\
     x_K
\end{array} \right)
\end{align*}
\\
For (\ref{JCP_dep^log}), $z=(r,x)^T$, $f(z) = \sum_{i =1}^{I_0} \mu_i^0 \text{exp}\left\{\sum_{j=1}^M {a_{ij}^0}r_j\right\} + \sqrt{\gamma_1^0} \sqrt{\sum\limits_{i =1}^{ I_0}\sum\limits_{l =1}^{I_0} \sigma_{i,l}^0  \text{exp} \left\{\sum_{j=1}^M (a_{ij}^0+a_{lj}^0)r_j\right\}}$ and 
\begin{align*}
  g(z) = \left( \begin{array}{c}
  \sum\limits_{i =1}^{I_1} \mu_i^1  \text{exp}\left\{\sum_{j=1}^M {a_{ij}^1}r_j\right\}+\sqrt{\gamma_1^1}  \sqrt{\sum\limits_{i =1}^{ I_1}\sum\limits_{l =1}^{I_1} \sigma_{i,l}^1  \text{exp} \left\{\sum\limits_{j=1}^M (a_{ij}^1+a_{lj}^1)r_j\right\}} \\
  + \sqrt{\frac{e^{x_1}}{1-e^{x_1}}}\sqrt{\gamma_2^1} \sqrt{\sum\limits_{i =1}^{ I_1}\sum\limits_{l =1}^{I_1} \sigma_{i,l}^1  \text{exp} \left\{\sum\limits_{j=1}^M (a_{ij}^1+a_{lj}^1)r_j\right\}} -1\\
  \vdots \\
    \sum\limits_{i=1}^{I_K} \mu_i^K  \text{exp}\left\{\sum\limits_{j=1}^M {a_{ij}^K}r_j\right\}+\sqrt{\gamma_1^K}  \sqrt{\sum\limits_{i =1}^{ I_K}\sum\limits_{l =1}^{I_K} \sigma_{i,l}^K  \text{exp} \left\{\sum_{j=1}^M (a_{ij}^K+a_{lj}^K)r_j\right\}}+\\ \sqrt{\frac{e^{x_K}}{1-e^{x_K}}}\sqrt{\gamma_2^K} \sqrt{\sum\limits_{i =1}^{ I_K}\sum\limits_{l =1}^{I_K} \sigma_{i,l}^K  \text{exp} \left\{\sum\limits_{j=1}^M (a_{ij}^K+a_{lj}^K)r_j\right\}} -1\\
     \text{log}(K-\epsilon) - \sum_{k=1}^K x_k\\
     x_1\\
     \vdots\\
     x_K
\end{array} \right)
\end{align*}
\\
For (\ref{JCP_NS-ind^log}),  $z = (r,x, \Tilde{\lambda}, \Tilde{\beta}, \Tilde{\pi})^T$, $f(z) =\sum\limits_{i=1}^{I_0} \mu_i^0 \prod_{j=1}^M t_j^{a_{ij}^0} $ and \begin{align*}
    g(z) = \left( \begin{array}{c}
 \text{log}(1-\epsilon) -  \sum\limits_{k=1}^K x_k\\
 x_1\\
 \vdots\\
 x_K\\
  \text{exp}(x_1-\Tilde{\lambda}_1) + \sum\limits_{i=1}^{I_1} \text{exp}\left\{-\Tilde{\lambda}_1+\Tilde{\beta}_i^1+\text{log}{\mu_i^1} \right\}-1\\
  \vdots\\
   \text{exp}(x_K-\Tilde{\lambda}_K) + \sum\limits_{i=1}^{I_K} \text{exp}\left\{-\Tilde{\lambda}_K+\Tilde{\beta}_i^K+\text{log}{\mu_i^K} \right\} -1\\
    \Tilde{\lambda}_1\\
    \vdots\\
    \Tilde{\lambda}_1 -\Tilde{\pi}_1 \\
    \vdots \\
     \Tilde{\lambda}_K -\Tilde{\pi}_K \\
  \Tilde{\pi}_1 +  \sum_{j=1}^M a_{ij}^1r_j - \Tilde{\beta}_i^1\leq 0, i=1,...,I_1\\
  \vdots\\
   \Tilde{\pi}_K +  \sum_{j=1}^M a_{ij}^Kr_j - \Tilde{\beta}_i^K\leq 0, i=1,...,I_K\\
\end{array} \right)
\end{align*}
We know that $z^*$ is an optimal solution of (\ref{eq:min_f}) if and only if the following Karush–Kuhn–Tucker (KKT) conditions are satisfied.
\begin{align*}
    &\nabla f(z) + \nabla g(z)^T\gamma = 0\\
    & \gamma \geq 0, \gamma^Tg(z)=0&
\end{align*}
Now, let $z(.)$ and $\gamma(.)$ be two time-dependent variables. The objective is to develop a continuous-time dynamical system that converges to the KKT point of the nonlinear programming problem (\ref{eq:min_f}). Hence, our current goal is to construct a neural network capable of converging to the KKT point of the nonlinear programming problem (\ref{eq:min_f}). We can outline the neural network model associated with (\ref{eq:min_f}) by the following nonlinear dynamical system
\begin{align}
   & \kappa \frac{dz}{dt} = -\left( \nabla f(z)+ \nabla g(z)^T\left(\gamma+g(z)\right)_+ \right)&\label{eq:KKT_1}\\
   & \kappa \frac{d\gamma}{dt} = -\gamma+\left(\gamma+g(z)\right)_+&\label{eq:KKT_2}
\end{align}
where $\kappa$ is a given convergence rate and  {$(x)_+ = \left(\max(x_1,0), \max(x_2,0), ..., \max(x_n,0)\right)$, for each $x=(x_1,x_2, ..., x_n) \in \mathbb{R}^n$}.

\begin{lemma}\label{th2}
    $(z^*, \gamma^*)$ is an equilibrium point of (\ref{eq:KKT_1})-(\ref{eq:KKT_2}) if and only if $z^*$ is an optimal solution of (\ref{eq:min_f}) where $\gamma^*$ is the corresponding Lagrange multiplier.
\end{lemma}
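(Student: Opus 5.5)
The argument is a direct comparison of the equilibrium equations of (\ref{eq:KKT_1})--(\ref{eq:KKT_2}) with the KKT system stated just before the lemma. Throughout we use that $f$ and each component of $g$ are convex and differentiable on the domain. Because of convexity, the KKT conditions are sufficient for optimality. They are also necessary once a constraint qualification holds, and we take Slater's condition, i.e.\ a strictly feasible point of (\ref{eq:min_f}), as an implicit standing assumption. The paper itself asserts the equivalence ``$z^*$ optimal iff KKT holds'', so we take that equivalence as given. The lemma then reduces to showing that equilibria are exactly KKT pairs.

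\textbf{Step 1: the complementarity identity.} Setting $d\gamma/dt=0$ in (\ref{eq:KKT_2}) gives $\gamma^*=(\gamma^*+g(z^*))_+$. We show componentwise that this is equivalent to
\[
\gamma^*\geq 0,\quad g(z^*)\leq 0,\quad \gamma^{*T}g(z^*)=0 .
\]
For the forward direction, fix a component $i$. The identity forces $\gamma_i^*\ge0$. If $\gamma_i^*>0$, then $\gamma_i^*=\gamma_i^*+g_i(z^*)$, so $g_i(z^*)=0$. If $\gamma_i^*=0$, then $\max(g_i(z^*),0)=0$, so $g_i(z^*)\le0$. In both cases $\gamma_i^*g_i(z^*)=0$. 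The converse is checked with the same two cases.

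\textbf{Step 2: the stationarity equation.} Setting $dz/dt=0$ in (\ref{eq:KKT_1}) gives $\nabla f(z^*)+\nabla g(z^*)^T(\gamma^*+g(z^*))_+=0$. By Step 1, $(\gamma^*+g(z^*))_+=\gamma^*$, so this becomes $\nabla f(z^*)+\nabla g(z^*)^T\gamma^*=0$. Together with Step 1, this is exactly the KKT system, so $z^*$ is optimal with multiplier $\gamma^*$.

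\textbf{Step 3: the converse.} Suppose $z^*$ is optimal. By the assumed constraint qualification there is a multiplier $\gamma^*$ satisfying KKT. The converse part of Step 1 gives $(\gamma^*+g(z^*))_+=\gamma^*$. Substituting this into both right-hand sides shows they vanish, so $(z^*,\gamma^*)$ is an equilibrium.

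\textbf{Main obstacle.} The only delicate point is the existence of the multiplier in Step 3, i.e.\ a constraint qualification. The cleanest route is to assume Slater's condition for each reformulation. For (\ref{JCP_ind^log}) and (\ref{JCP_dep^log}) this is plausible by taking $x_k$ slightly negative with $r$ making the posynomial terms small. A further technicality is that the $x_k$-terms are undefined at $x_k=0$, so the analysis must be restricted to the open domain where $g$ is differentiable. We would state both conditions explicitly as assumptions rather than prove them in general.
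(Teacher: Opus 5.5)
Your proof is correct. The paper states this lemma without proof, presumably carrying it over from Tassouli \& Lisser \cite{TASSOULI2023765}, but your argument is the standard one and is the same route the paper takes for the analogous equilibrium theorem of (\ref{NN1})--(\ref{NN2}) in Section \ref{sec:two-time_scale_neurodynamic_duplex}: set the time derivatives to zero, then use the projection identity $(\omega+g)_+=\omega$ to recover the KKT system (Lemma \ref{th:KKT_equivalence}, which is also left unproved there). Your version is more careful than what the paper displays in three respects. Step 1 extracts primal feasibility $g(z^*)\le 0$, which the paper's stated KKT system omits but which sufficiency of KKT for optimality requires. Step 3 makes explicit that the converse needs a constraint qualification, unless $\gamma^*$ is read as a multiplier given together with $z^*$. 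You also note that $g$ is only defined and differentiable on $x_k<0$.
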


\begin{lemma}
For any initial point $(z(t_0), \gamma(t_0))$, there exists
a unique continuous solution $(z(t), \gamma(t))$ for (\ref{eq:KKT_1})-(\ref{eq:KKT_2}).
\end{lemma}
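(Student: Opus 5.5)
The plan is to rewrite (\ref{eq:KKT_1})--(\ref{eq:KKT_2}) as an autonomous system $\frac{dw}{dt}=\Phi(w)$ with $w=(z,\gamma)$ and
\[
\Phi(w)=-\frac{1}{\kappa}\begin{pmatrix}\nabla f(z)+\nabla g(z)^T(\gamma+g(z))_+\\ \gamma-(\gamma+g(z))_+\end{pmatrix},
\]
and then invoke the Picard--Lindel\"of (Cauchy--Lipschitz) theorem. The first step is to check that $\Phi$ is locally Lipschitz on its domain. The functions $f$ and $g$ are built from exponentials of affine maps, square roots of positive sums of such exponentials (with $\sigma_{i,l}^k\geq 0$ and not all zero, so the radicand stays bounded away from zero on compact sets), and the term $\sqrt{e^{x_k}/(1-e^{x_k})}$, which is smooth on $x_k<0$. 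Hence $f,g$ are $C^2$ on the open domain, so $\nabla f$, $g$ and $\nabla g$ are $C^1$ and therefore locally Lipschitz. The map $u\mapsto(u)_+$ is globally $1$-Lipschitz. Products and compositions of locally Lipschitz maps are locally Lipschitz, so $\Phi$ is locally Lipschitz. This already gives a unique continuous (indeed $C^1$) local solution through any initial point $(z(t_0),\gamma(t_0))$.

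The second step is to extend the solution to all $t\geq t_0$ by ruling out finite-time blow-up, or exit from the domain in the $x_k$ variables. For this I would use a Lyapunov-type energy
\[
E(z,\gamma)=f(z)+\tfrac12\|(\gamma+g(z))_+\|^2-\tfrac12\|\gamma\|^2 ,
\]
the standard augmented-Lagrangian function whose partial gradients are exactly $\kappa\,\dot z$ (up to sign) and $\kappa\,\dot\gamma$. An alternative is to take $V=\|z-z^*\|^2+\|\gamma-\gamma^*\|^2$ around a KKT point $(z^*,\gamma^*)$, which exists by Lemma \ref{th2}. For $V$, convexity of $f$ and of each $g_i$ together with the monotonicity of the projection should give $\dot V\leq 0$. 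That makes the trajectory bounded, so the maximal solution is global.

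The main obstacle is that the domain is not all of $\mathbb{R}^n$: the terms $\sqrt{e^{x_k}/(1-e^{x_k})}$ blow up as $x_k\to 0^-$. To handle this I would first try extending $g$ in a locally Lipschitz, convex way for $x_k\geq 0$, for instance by linearizing beyond a threshold, or else argue that the penalty term keeps $x_k$ away from $0$ along trajectories. Failing both, I would state the result for initial points in the open domain and rely on boundedness of $V$ to keep trajectories inside it. With the solution global and Lipschitz-continuous in $w$, uniqueness follows directly from the Picard--Lindel\"of theorem via the Gronwall inequality.
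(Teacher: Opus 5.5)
\paragraph{Local part.} Your first step is correct, and it is the argument the paper relies on. The lemma is stated without proof, and the standard justification for networks of this form is exactly yours: $\nabla f$, $g$, $\nabla g$ are $C^1$ on the open domain $\Omega=\{x_k<0\ \forall k\}$ and $(\cdot)_+$ is $1$-Lipschitz, so the right-hand side is locally Lipschitz. Picard--Lindel\"of then gives a unique continuous solution on a maximal interval $[t_0,T)$ for every initial point in $\Omega$. Outside $\Omega$ the vector field is undefined, so ``any initial point'' has to be read this way. If the lemma is read as a local statement, you are done after step one.

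\paragraph{Extension to $T=\infty$.} Here two of your steps would fail as written.

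First, $E$ cannot serve as a Lyapunov function. It is convex in $z$ and concave in $\gamma$, and the flow is descent in $z$ and ascent in $\gamma$. Hence $\frac{dE}{dt}=-\kappa\|\dot z\|^2+\kappa\|\dot\gamma\|^2$, which has no sign. $V$ is the right choice: $\dot V\le 0$ follows from this convex--concave structure together with the fact that a KKT pair is a saddle point of $E$. However, the existence of such a pair is an extra hypothesis (an optimal solution plus a constraint qualification such as Slater). It does not follow from Lemma \ref{th2}, which only characterizes equilibria.

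Second, your fallback ``rely on boundedness of $V$ to keep trajectories inside the domain'' does not work. The sublevel set $\{V\le V(t_0)\}$ generally meets $\{x_k\ge 0\}$. A maximal solution in an open set $\Omega$ can cease to exist by approaching $\partial\Omega$ while staying bounded, so you need confinement to a \emph{compact subset} of $\Omega$. Your first remedy, modifying $g$ near $x_k=0$, changes the ODE, so it does not prove the stated lemma. In fact no continuous extension exists, since $g_k\to+\infty$ as $x_k\to0^-$.

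\paragraph{The missing argument.} Your second remedy is the one that works:
\begin{itemize}
\item On the bounded set furnished by $V$, $r$ and $\gamma$ are bounded.
\item So as $x_k\to 0^-$, both $g_k$ and $\partial_{x_k}g_k$ tend to $+\infty$ uniformly, and the term $-\partial_{x_k}g_k\,(\gamma_k+g_k)_+$ in $\kappa\dot x_k$ tends to $-\infty$.
\item The remaining contributions, $(\gamma_{K+1}+\log(1-\epsilon)-\sum_j x_j)_+$ and $-(\gamma_{K+1+k}+x_k)_+$ (components numbered as in the paper), are bounded above.
\item Hence there is $\delta>0$ with $\dot x_k<0$ whenever $x_k\in(-\delta,0)$, so $x_k(t)\le\max(x_k(t_0),-\delta)<0$ along the trajectory.
\end{itemize}
Together with the bound from $V$, this keeps the trajectory in a compact subset of $\Omega$ and yields $T=\infty$.
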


\begin{lemma}
    The neural network proposed in equations (\ref{eq:KKT_1})-(\ref{eq:KKT_2}) exhibits global stability in the Lyapunov sense. Furthermore, the dynamical network globally converges to a KKT point denoted $(z^*, \gamma^*)$ where $z^*$ is the optimal solution of the problem (\ref{eq:min_f}).
\end{lemma}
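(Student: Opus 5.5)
We follow the standard Lyapunov argument for projection-type neural networks with convex data, as in Xia \& Wang and in \cite{TASSOULI2023765}. Write $u=(z,\gamma)$ and let $F(u)$ denote the right-hand side of (\ref{eq:KKT_1})-(\ref{eq:KKT_2}) multiplied by $\kappa$. Fix an equilibrium $u^*=(z^*,\gamma^*)$; it exists by Lemma \ref{th2} provided (\ref{eq:min_f}) has an optimal solution with a KKT multiplier, which we assume via a Slater-type condition. The candidate Lyapunov function is
\begin{equation*}
V(z,\gamma)= f(z)-f(z^*)-\nabla f(z^*)^T(z-z^*) +\tfrac12\|(\gamma+g(z))_+\|^2-\tfrac12\|(\gamma^*+g(z^*))_+\|^2 -(\gamma^*)^T\big(g(z)-g(z^*)\big)+\tfrac12\|z-z^*\|^2+\tfrac12\|\gamma-\gamma^*\|^2 .
\end{equation*}
Equivalently, $V$ is the augmented Lagrangian $L(z,\gamma)=f(z)+\tfrac12\|(\gamma+g(z))_+\|^2-\tfrac12\|\gamma\|^2$ linearized at $u^*$, plus a quadratic distance term. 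The first step is to check that $V\ge \tfrac12\|u-u^*\|^2$. This follows from convexity of $f$ and of each component $g_i$, from $\gamma^*\ge0$, and from the identity $(\gamma^*+g(z^*))_+=\gamma^*$. It makes $V$ radially unbounded and gives $V(u^*)=0$.

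The second step is to compute $\dot V$ along trajectories. The gradient of $V$ is
\[
\nabla_z V=\nabla f(z)-\nabla f(z^*)+\nabla g(z)^T\big((\gamma+g)_+-\gamma^*\big)+(z-z^*),\qquad \nabla_\gamma V=(\gamma+g)_+-\gamma+\gamma-\gamma^* .
\]
Substituting $\kappa\dot z=-(\nabla f+\nabla g^T(\gamma+g)_+)$ and $\kappa\dot\gamma=-\gamma+(\gamma+g)_+$, the quadratic pieces should give
\[
\kappa\dot V\le -\|\kappa\dot z\|^2-\|\kappa\dot\gamma\|^2+\text{cross terms}.
\]
The cross terms are controlled by two ingredients. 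The first is the monotonicity inequality $(z-z^*)^T(\nabla f(z)-\nabla f(z^*))\ge0$, together with its analogue for each $g_i$ weighted by nonnegative multipliers. The second is the projection inequality $\big(v-P_{\mathbb R_+}(v)\big)^T\big(P_{\mathbb R_+}(v)-w\big)\ge0$ for $w\ge0$, applied with $v=\gamma+g(z)$ and $w=\gamma^*$. The goal is the bound $\dot V\le -c\,\|F(u)\|^2\le 0$, which gives Lyapunov stability of $u^*$ and boundedness of every trajectory. Global existence then follows from the preceding lemma.

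The third step is convergence, via LaSalle's invariance principle. Trajectories are bounded, and the largest invariant set in $\{\dot V=0\}$ is contained in $\{F(u)=0\}$, the set of equilibria. Hence $u(t)$ approaches this set, and some $\omega$-limit point $\hat u$ is an equilibrium. Repeating the construction with $u^*$ replaced by $\hat u$ gives a nonincreasing $\hat V$ that tends to $0$ along a subsequence. Since $\hat V\ge\tfrac12\|u-\hat u\|^2$, the whole trajectory converges to $\hat u$. By Lemma \ref{th2}, $\hat z$ is optimal for (\ref{eq:min_f}).

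The main obstacle is the sign of the cross terms in the second step. Because $g$ is nonlinear, the term $\nabla g(z)^T(\gamma+g)_+$ cannot be handled by a plain monotonicity argument. The plan is to use convexity of $\gamma_i g_i$ for $\gamma_i\ge0$, i.e.
\[
\big(\nabla g(z)^T\eta-\nabla g(z^*)^T\eta\big)^T(z-z^*)\ge0 \quad\text{for } \eta\ge0,
\]
combined with the projection inequality. This shows that the map $(z,\gamma)\mapsto\big(\nabla f+\nabla g^T(\gamma+g)_+,\ \gamma-(\gamma+g)_+\big)$ is monotone in the sense needed. If the $\tfrac12\|u-u^*\|^2$ term spoils the exact cancellation, the fallback is the standard Xia--Wang estimate $\dot V\le -(u-u^*)^T F(u)$ with $-(u-u^*)^TF(u)\le0$, which only gives $\dot V\le0$. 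Strict decrease outside equilibria would then be recovered through the LaSalle argument.
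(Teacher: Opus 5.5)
\textbf{Step 1 fails: your $V$ is not positive definite.} Take $\min z$ s.t.\ $1-z\le 0$, so $z^*=1$ and $\gamma^*=1$. Your formula becomes $V(z,\gamma)=\frac12(\gamma+1-z)_+^2-\frac12+(z-1)+\frac12(z-1)^2+\frac12(\gamma-1)^2$, hence $V(1,\frac12)=-\frac14<0$. In general, if $\gamma_i^*>0$, then at $z=z^*$, $\gamma=\gamma^*-\delta e_i$ with $0<\delta<\gamma_i^*$, you get $V=\delta^2-\delta\gamma_i^*<0$. So the claim $V\ge\frac12\|u-u^*\|^2$ is false, and $V$ is not a Lyapunov function. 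A convexity argument of the kind you sketch does work for the Bregman distance of the jointly convex map $\Psi(z,\gamma)=f(z)+\frac12\|(\gamma+g(z))_+\|^2$. Its linear part is $-\nabla\Psi(u^*)^T(u-u^*)=-(\gamma^*)^T(\gamma-\gamma^*)$, not the terms $-\nabla f(z^*)^T(z-z^*)-(\gamma^*)^T(g(z)-g(z^*))$ you use. Also, the $\nabla_\gamma V$ you state is not the gradient of the $V$ you wrote; it corresponds to an extra term $-\frac12\|\gamma\|^2$.

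\textbf{Step 2 carries the whole proof, but it is only announced, and the sign pattern you expect is wrong.} The network is a descent--ascent flow for the augmented Lagrangian $L(z,\gamma)=f(z)+\frac12\|(\gamma+g(z))_+\|^2-\frac12\|\gamma\|^2$, which is concave in $\gamma$: $\kappa\dot z=-\nabla_zL$ but $\kappa\dot\gamma=+\nabla_\gamma L$. Any $V$ whose $\gamma$-gradient contains $(\gamma+g)_+-\gamma$ therefore picks up $+\|\kappa\dot\gamma\|^2$, not $-\|\kappa\dot\gamma\|^2$. This covers yours under either reading, and also the corrected Bregman version. In the toy example your $V$ gives $\kappa\dot V=\frac14>0$ at $(z,\gamma)=(0,\frac12)$.

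So the real obstacle is this saddle structure, not the curvature of $g$. Without $\dot V\le -c\|F\|^2$, your LaSalle claim that the invariant set lies in $\{F=0\}$ is unsupported. The fallback has a sign slip: monotonicity gives $(u-u^*)^TF(u)\le 0$, so the inequality $\dot V\le-(u-u^*)^TF(u)$ only bounds $\dot V$ by a nonnegative number. The correct plain-distance estimate $\kappa\frac{d}{dt}\frac12\|u-u^*\|^2=(u-u^*)^TF(u)\le0$ still leaves the invariant set to be identified, and LaSalle does not do that for you.

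\textbf{How the paper avoids this.} It does not use a Lagrangian-type function. It takes $E=\|U\|^2+\frac12\|\zeta-\zeta^*\|^2$ and observes that the Jacobian of $U$ has skew off-diagonal blocks $-\nabla g^p(z)^T$ and $\nabla g^p(z)$. The symmetric part is therefore block diagonal and negative semidefinite, which gives $\frac{d}{dt}\|U\|^2\le 0$ with no cross terms. Monotonicity of $-U$ controls the distance term, and convergence is deferred to \cite{TASSOULI2023765}.

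\textbf{How to rescue your route.} Keep the distance function and use the sharper bound $-(u-u^*)^TF(u)\ge\|\kappa\dot\gamma\|^2+(z-z^*)^T(\nabla f(z)-\nabla f(z^*))$. It follows from convexity of each $g_i$, from $(\gamma+g)_+\ge 0$, and from firm nonexpansiveness of $(\cdot)_+$. On the $\omega$-limit set this forces $\gamma\equiv\bar\gamma$ and makes $z$ a gradient flow of the convex function $f+\bar\gamma^Tg$. From there one can show the invariant set consists of equilibria, and this route avoids the second derivatives the paper's Jacobian argument needs.
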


\begin{proof}
    Let $\zeta = (z, \gamma)$, we define 
    \begin{align*}
        U(\zeta) =  \begin{bmatrix}
     -\left( \nabla f(z)+ \nabla g(z)^T\left(\gamma+g(z)\right)_+ \right)\\
    -\gamma+\left(\gamma+g(z)\right)_+
  \end{bmatrix}
  \end{align*}
  First, consider the following Lyapunov function
  \begin{equation*}
      E(\zeta) = ||U(\zeta)||^2 + \frac{1}{2} ||\zeta-\zeta^*||, 
  \end{equation*} where $\zeta^* = (z^*, \gamma^*)$ is an equilibrium point of (\ref{eq:KKT_1})-(\ref{eq:KKT_2}).\\
  $\frac{dE(\zeta(t))}{dt} = \frac{dU}{dt}^T U + U^T\frac{dU}{dt}+ (\zeta-\zeta^*)^T\frac{d\zeta}{dt}$. Observe that $\frac{dU}{dt}= \frac{dU}{d\zeta} \times \frac{d\zeta}{dt} = \nabla U(\zeta)U(\zeta)$.
  Without loss of generality suppose that there exists $p\in \mathbb{N}$ such that $(\gamma+g(z))_+ = (\gamma_1+g_1(z)), ..., (\gamma_p+g_p(z)), 0, ..., 0)$, and we define $g^p = (g_1, ..., g_p)$.\\
  We have
  \begin{align*}
  \nabla U(\zeta) =  \begin{bmatrix}
     -\left( \nabla^2 f(z)+ \sum_{i=1}^p\nabla^2 g^p(z)\left(\gamma_p+g_p(z)\right)+\nabla g(z)^T\nabla g(z) \right) & -\nabla g^p(z)^T\\
    \nabla g^p(z) & S_p
  \end{bmatrix}
  \end{align*}
  where $S_p  =\begin{bmatrix}
&O_{p \times p} & O_{p \times (N-p)}\\
&O_{(N-p) \times q} & I_{(N-p) \times (N-p)}\\
\end{bmatrix}$ and $N$ is the length of vector $\gamma$.\\
Since $f$ and $g$ are convex, then the Hessian matrices $\nabla^2f$ and $\nabla^2g^p$ are positive semidefinite. Furthermore $\nabla g^T\nabla g$ is positive semidefinite, we conclude that $\nabla U$ is negative semidefinite.\\
\\
Back to the expression of $\frac{dE(\zeta(t))}{dt}$, we have

\begin{align*}
    \frac{dE(\zeta(t))}{dt} = \underbrace{U^T(\nabla U+ \nabla U^T) U}_{\leq 0 \text{ since $\nabla U$ is negative semidefinite}}+  \underbrace{(\zeta-\zeta^*)^T (U(\zeta)-U(\zeta^*))}_{\leq 0 \text{ by Lemma 4 in \cite{TASSOULI2023765}}} \leq 0
\end{align*}
\\
Then, the neural network (\ref{eq:KKT_1})-(\ref{eq:KKT_2}) is globally stable in the sense of Lyapunov. Next, similarly to the proof of Theorem 5 in \cite{TASSOULI2023765}, we prove that the dynamical neural network (\ref{eq:KKT_1})-(\ref{eq:KKT_2}) is globally convergent to $(z^*, \gamma^*)$ where $z^*$ is the optimal solution of (\ref{eq:min_f}).
\end{proof}

\noindent The advantage of a dynamical recurrent neural network to solve
problems (\ref{JCP_dep^log}), (\ref{JCP_ind^log}) and (\ref{JCP_NS-ind^log}) is
to solve multiple instances of the problems at the same time after training the
neural network. To do this, we consider a conditional recurrent neural network
architecture \cite{lauria2024conditional}. This variant of RNN conditions the
prediction based on a given set of external parameters $\theta \in \Theta$, which
are not part of the sequential data. We consider problem data as external
parameters modelling different instances of the same problem.\\
Let $\theta \in \Theta$ with $\Theta$ a compact set of parameters. The
conditional RNN associated to equations (\ref{eq:KKT_1})-(\ref{eq:KKT_2}) is
given by the following system (\ref{eq:conditional_KKT_1})-(\ref{eq:conditional_KKT_2})
\begin{align}
    & \kappa \frac{dz}{dt} = -\left( \nabla f_{\theta}(z)+ \nabla g_{\theta}(z)^T\left(\gamma+g_{\theta}(z)\right)_+ \right)&\label{eq:conditional_KKT_1}\\
    & \kappa \frac{d\gamma}{dt} =
-\gamma+\left(\gamma+g_{\theta}(z)\right)_+&\label{eq:conditional_KKT_2}
\end{align}

\noindent To solve multiple instances of the same problem, we consider that
each $\theta$ represents a different instance of the problem data
\cite{wu2023deep}. In the case of problems (\ref{JCP_dep^log}),
(\ref{JCP_ind^log}) and (\ref{JCP_NS-ind^log}), we have $\theta = \{(\mu_i^k,
a_i^k, \epsilon) \text{ } | \text{ } \forall k \in \llbracket 0, K \rrbracket
\text{ } \forall i \in I_k\}$. Each $\theta$ can also represent a combination of
such data. When solving one instance of optimization problem, the parameter
$\theta$ is constant, and the recurrent neural network
(\ref{eq:conditional_KKT_1})-(\ref{eq:conditional_KKT_2}) degrades to
(\ref{eq:KKT_1})-(\ref{eq:KKT_2}). To illustrate this feature, we present in
Figure \ref{fig:block_diagram_circuit} a generalized circuit implementation
realizing the system (\ref{eq:conditional_KKT_1})-(\ref{eq:conditional_KKT_2}).
\begin{figure}[!htb]
 \includegraphics[scale=1]{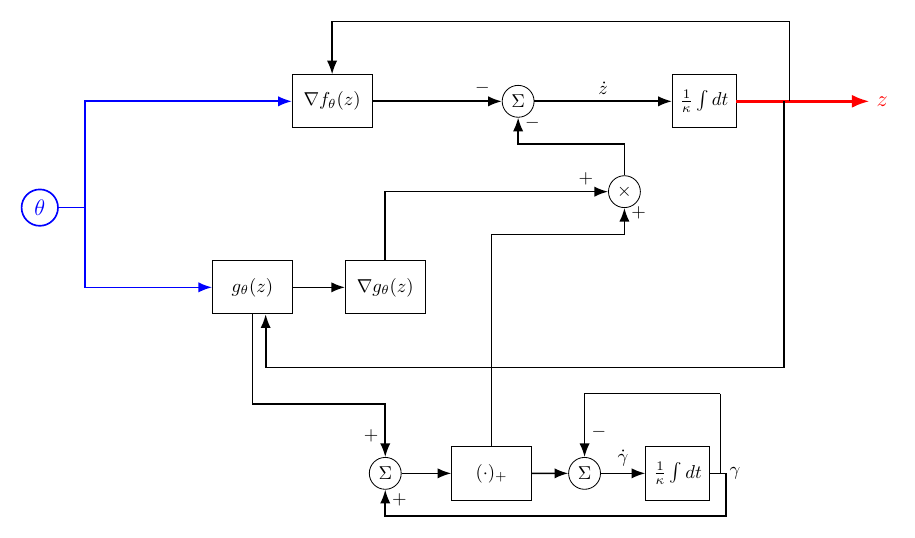}
 \caption{A block diagram for the neural network (\ref{eq:conditional_KKT_1}-\ref{eq:conditional_KKT_2})}
 \label{fig:block_diagram_circuit}
\end{figure}


\section{A two-time scale neurodynamic duplex for (\texorpdfstring{\ref{JCP_NS-dep^log}} \ )}
\label{sec:two-time_scale_neurodynamic_duplex}
(\ref{JCP_NS-dep^log}) can be written in the following general form 
\begin{eqnarray}
&\min\limits_{z,y} & f(z), \label{eq:min_f_z_y} \\
&{\rm s.t.}&  g(z,y) \leq 0, \notag
\end{eqnarray}
where $f$ is a convex function and $g$ is a biconvex function, $z = (r, \Tilde{\lambda}, \Tilde{\beta}, \Tilde{\pi})^T$, $f(z) =\sum\limits_{i=1}^{I_0} \mu_i^0 \prod_{j=1}^M t_j^{a_{ij}^0} $ and
\begin{align*}
g(z,y) = \left( \begin{array}{c}
K-\epsilon-\prod\limits_{k=1}^K y_k\\
 -y_1\\
 \vdots\\
 -y_K\\
  y_1-1\\
 \vdots\\
 y_K-1\\
  y_1\text{exp}(-\Tilde{\lambda}_1) + \sum\limits_{i=1}^{I_1} \text{exp}\left\{-\Tilde{\lambda}_1+\Tilde{\beta}_i^1+\text{log}{\mu_i^1} \right\}-1\\
  \vdots\\
  y_K \text{exp}(-\Tilde{\lambda}_K) + \sum\limits_{i=1}^{I_K} \text{exp}\left\{-\Tilde{\lambda}_K+\Tilde{\beta}_i^K+\text{log}{\mu_i^K} \right\} -1\\
    \Tilde{\lambda}_1\\
    \vdots\\
    \Tilde{\lambda}_1 -\Tilde{\pi}_1 \\
    \vdots \\
     \Tilde{\lambda}_K -\Tilde{\pi}_K \\
  \Tilde{\pi}_1 +  \sum_{j=1}^M a_{ij}^1r_j - \Tilde{\beta}_i^1\leq 0, i=1,...,I_1\\
  \vdots\\
   \Tilde{\pi}_K +  \sum_{j=1}^M a_{ij}^Kr_j - \Tilde{\beta}_i^K\leq 0, i=1,...,I_K\\
\end{array} \right)
\end{align*}
We denote $\mathcal{U} = \{z,y \text{ }| \text{ } g(z,y) \leq 0\}$ the feasible set of (\ref{eq:min_f_z_y}). The Lagrangian function of problem (\ref{eq:min_f_z_y}) is defined as follows: 
\begin{align}
\mathcal{L}(z,y,\omega) = f(z) + \omega^T g(z,y).
\end{align}
For any $(z,y) \in \mathcal{U}$, the KKT conditions are stated as follows:
\begin{align}
&\nabla L(z,y,\omega) = 0,\label{eq:KKT_Lagrangian_1}\\
&\omega \geq 0, \quad \omega^Tg(z,y) = 0.\label{eq:KKT_Lagrangian_2}
\end{align}

\begin{definition}\label{def1}
    Let $(z,y) \in \mathcal{U}$, $(z,y)$ is called a partial optimum of (\ref{eq:min_f_z_y}) if and only if 
    \begin{equation*}
        f(z) \leq f(\tilde{z}), \forall \tilde{z} \in \mathcal{U}_y,
    \end{equation*}
    where $\mathcal{U}_y = \{z \text{ }| \text{ } g(z,y) \leq 0\}$.
\end{definition}

\begin{lemma}\label{th:KKT_equivalence}
    The KKT system (\ref{eq:KKT_Lagrangian_1})-(\ref{eq:KKT_Lagrangian_2}) is equivalent to the following system
    \begin{align}
    &\nabla f(z) +  \nabla_z g(z, y)^T (\omega+g(x,z))_+ = 0 &\label{eq:KKT_equivalent_1}\\
    &  \nabla_yg(z, y)^T(\omega+g(z,y))_+ =0&\\
    &(\omega+g(x, z))_+-\omega=0 & \label{eq:KKT_equivalent_2}
\end{align}
\end{lemma}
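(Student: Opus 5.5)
The core of the proof is the classical fact that complementarity can be encoded by a projection: for vectors $\omega$ and $g$,
\begin{equation*}
\omega \geq 0,\quad g\leq 0,\quad \omega^Tg=0 \iff (\omega+g)_+ = \omega .
\end{equation*}
I will prove this componentwise. Fix an index $i$. If $\omega_i\geq 0$, $g_i\leq 0$ and $\omega_i g_i=0$, then either $\omega_i=0$, in which case $(\omega_i+g_i)_+=(g_i)_+=0=\omega_i$, or $g_i=0$, in which case $(\omega_i)_+=\omega_i$. Conversely, suppose $\max(\omega_i+g_i,0)=\omega_i$. Then $\omega_i\geq 0$. If $\omega_i>0$, the maximum is attained by $\omega_i+g_i$, which forces $g_i=0$. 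If $\omega_i=0$, then $(g_i)_+=0$, so $g_i\leq 0$. In both cases $\omega_i g_i=0$. Summing over $i$ gives the vector statement.

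Throughout I read the typos $g(x,z)$ in the statement as $g(z,y)$. The implicit condition $(z,y)\in\mathcal{U}$ from the KKT setup must be made explicit. In the forward direction it is used as a hypothesis. In the reverse direction it is recovered from \eqref{eq:KKT_equivalent_2}, exactly as in the componentwise argument above.

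Next I split $\nabla \mathcal{L}=0$ into its two blocks. With respect to $z$ it reads $\nabla f(z)+\nabla_z g(z,y)^T\omega=0$, since $f$ does not depend on $y$. With respect to $y$ it reads $\nabla_y g(z,y)^T\omega=0$.

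For the forward implication, assume \eqref{eq:KKT_Lagrangian_1}-\eqref{eq:KKT_Lagrangian_2} together with feasibility $g(z,y)\leq0$. The projection identity gives $(\omega+g)_+=\omega$, which is \eqref{eq:KKT_equivalent_2}. Substituting $\omega$ by $(\omega+g)_+$ in the two gradient blocks then gives \eqref{eq:KKT_equivalent_1} and the middle equation.

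For the reverse implication, \eqref{eq:KKT_equivalent_2} gives $\omega\geq0$, $g(z,y)\leq 0$ (hence $(z,y)\in\mathcal{U}$) and $\omega^Tg=0$. Substituting $(\omega+g)_+=\omega$ back into the first two equations recovers $\nabla\mathcal{L}=0$.

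The only delicate point is bookkeeping rather than any real obstacle. The gradient equations of the new system contain $(\omega+g)_+$ rather than $\omega$. They can therefore only be converted to $\nabla\mathcal{L}=0$ after \eqref{eq:KKT_equivalent_2} has been established, so I will handle that equation first in both directions. This mirrors Lemma \ref{th2}, whose equilibrium characterization relies on the same projection identity.
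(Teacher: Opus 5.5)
The paper gives no proof of this lemma; it tacitly relies on the standard projection characterization of complementarity, $\omega\ge 0,\ g\le 0,\ \omega^Tg=0 \iff (\omega+g)_+=\omega$, which also underlies Lemma~\ref{th2}. Your argument is therefore the intended one, and it is correct, including your explicit handling of feasibility $(z,y)\in\mathcal{U}$: the paper's KKT system only assumes it in passing, and the reverse direction must recover it from (\ref{eq:KKT_equivalent_2}). One step is worth writing out, in the forward direction of the identity: going from $\omega^Tg=0$ to $\omega_i g_i=0$ for each $i$ needs the observation that every summand $\omega_i g_i$ is nonpositive under the sign conditions, because ``summing over $i$'' only covers the converse.
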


\noindent Based on the equations (\ref{eq:KKT_equivalent_1})-(\ref{eq:KKT_equivalent_2}), we consider the following two-time-scale recurrent neural network model

\begin{align}
&\kappa_1 \frac{dz}{dt} =-\left( \nabla f(z) +  \nabla_z g(z, y)^T (\omega+g(x,z))_+\right),& \label{NN1}\\
&\kappa_2 \frac{dy}{dt} = -\left(\nabla_yg(z, y)^T(\omega+g(z,y))_+ \right),&\\
&\kappa_2 \frac{d\omega}{dt} = -\omega + (\lambda+g(z,y))_+ ,&\label{NN2}
\end{align}
\\
where $(z,y, \omega)$ are now time-dependent variables and $\kappa_1$ and
$\kappa_2$ are two time scaling constants with $\kappa_1 \neq \kappa_2$. We
propose a duplex of two two-time-scale recurrent neural network
(\ref{NN1})-(\ref{NN2}) for solving (\ref{eq:min_f}) one with $\kappa_1 >
\kappa_2$ and the second with $\kappa_1 < \kappa_2$ as shown in Figure
\ref{fig:block_diagram}. \FloatBarrier

 \begin{figure}[!htb] 
\hspace{2.8 cm}
 \includegraphics[scale=0.8]{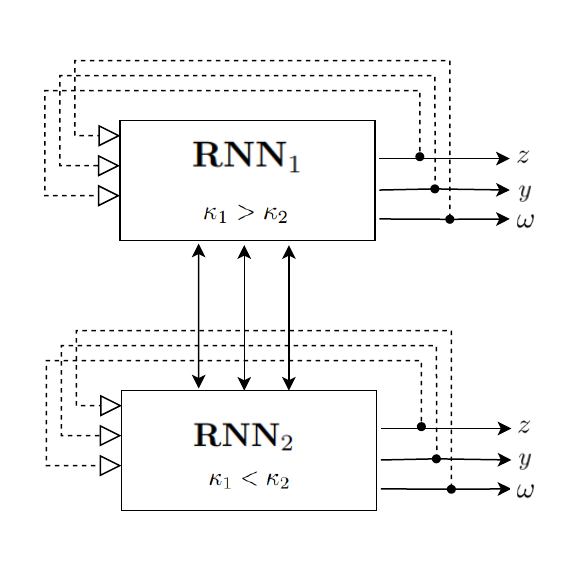}
 \caption{A block diagram depicting a duplex neurodynamic system with a two-timescale configuration}
\label{fig:block_diagram}
\end{figure}

\FloatBarrier

\begin{theorem}
    $(z,y,\omega)$ is an equilibrium point of (\ref{NN1})-(\ref{NN2}) if and only if $(z,y,\omega)$ is a KKT point of (\ref{eq:min_f_z_y}).
\end{theorem}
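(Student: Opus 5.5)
The plan is to reduce the statement to Lemma~\ref{th:KKT_equivalence}. Since $\kappa_1,\kappa_2>0$ are constants, setting the right-hand sides of (\ref{NN1})--(\ref{NN2}) to zero gives exactly the three equations (\ref{eq:KKT_equivalent_1})--(\ref{eq:KKT_equivalent_2}). Throughout, the typographical slips in (\ref{NN1})--(\ref{NN2}) are read as intended: $g(x,z)$ means $g(z,y)$, and $\lambda$ in the last equation means $\omega$. With that reading, $(z,y,\omega)$ is an equilibrium if and only if
\begin{align*}
&\nabla f(z)+\nabla_z g(z,y)^T(\omega+g(z,y))_+=0,\\
&\nabla_y g(z,y)^T(\omega+g(z,y))_+=0,\\
&(\omega+g(z,y))_+=\omega .
\end{align*}
Lemma~\ref{th:KKT_equivalence} says this system is equivalent to the KKT system (\ref{eq:KKT_Lagrangian_1})--(\ref{eq:KKT_Lagrangian_2}), which would finish the proof. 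To make the argument self-contained, I would also verify that equivalence directly.

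\textbf{Step 1 (projection identity).} For vectors $\omega,g$, the identity $(\omega+g)_+=\omega$ holds if and only if $\omega\ge 0$, $g\le 0$ and $\omega^Tg=0$. I would prove this componentwise in two cases.
\begin{itemize}
\item If $\omega_i>0$, then $\max(\omega_i+g_i,0)=\omega_i$ forces $g_i=0$.
\item If $\omega_i=0$, the identity reads $\max(g_i,0)=0$, which means $g_i\le 0$.
\end{itemize}
Negative $\omega_i$ is impossible, since a max with $0$ is nonnegative. The converse direction is immediate from the same case split.

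\textbf{Step 2 (substitution).} Under the third equation, $(\omega+g(z,y))_+=\omega$. Substituting this into the first two equations turns them into $\nabla_z\mathcal L(z,y,\omega)=0$ and $\nabla_y\mathcal L(z,y,\omega)=0$, because $f$ does not depend on $y$. Together these are $\nabla\mathcal L=0$.

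\textbf{Step 3 (both directions).} Conversely, take a KKT point, so $\omega\ge 0$, $\omega^Tg=0$ and $(z,y)\in\mathcal U$. Step 1 gives $(\omega+g)_+=\omega$, and then the stationarity condition $\nabla\mathcal L=0$ gives the first two equilibrium equations. In the forward direction, Step 1 also yields $g(z,y)\le 0$, so the equilibrium lies in $\mathcal U$, as the KKT definition requires.

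\textbf{Main obstacle.} The only delicate point is primal feasibility. The KKT conditions (\ref{eq:KKT_Lagrangian_1})--(\ref{eq:KKT_Lagrangian_2}) are stated only for $(z,y)\in\mathcal U$, so the equivalence must produce feasibility from the equilibrium equations rather than assume it. Step 1 handles this. Everything else is linear substitution, and no convexity or biconvexity of $g$ is needed, since the statement is about KKT points, not optimality.
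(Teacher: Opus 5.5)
Your proposal is correct and matches the paper's proof: the paper also sets the right-hand sides of (\ref{NN1})--(\ref{NN2}) to zero, recognizes the system (\ref{eq:KKT_equivalent_1})--(\ref{eq:KKT_equivalent_2}), invokes Lemma~\ref{th:KKT_equivalence}, and calls the converse straightforward. Your extra verification of that lemma, via the componentwise identity $(\omega+g)_+=\omega \iff \omega\ge 0,\ g\le 0,\ \omega_i g_i=0$, is sound; the paper states the lemma without proof, and your check usefully shows that primal feasibility follows from the equilibrium equations rather than being assumed.
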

\begin{proof}
    Let $(z,y,\omega)$ is an equilibrium point of (\ref{NN1})-(\ref{NN2}). We have then 
    \begin{align*}
        \frac{dz}{dt} &=0  \Longleftrightarrow -\left( \nabla f(z) +  \nabla_z g(z, y)^T (\omega+g(x,z))_+\right) = 0\\
        \frac{dy}{dt} &=0   \Longleftrightarrow -\left(\nabla_yg(z, y)^T(\omega+g(z,y))_+ \right)=0\\
        \frac{d\omega}{dt} &=0  \Longleftrightarrow -\omega + (\lambda+g(z,y))_+=0.
    \end{align*}
    We obtain system (\ref{eq:KKT_equivalent_1})-(\ref{eq:KKT_equivalent_2}). By Theorem \ref{th:KKT_equivalence}, the conclusion follows. The converse part of the Theorem is straightforward.
\end{proof}

\noindent The process begins by initializing the state variables of the neurodynamic models. Subsequently, each model undergoes a precise local search based on its dynamics to optimize its performance. Once all neurodynamic models have converged to their equilibria, the initial states of the recurrent neural networks are optimized using the particle swarm optimization (PSO) updating rule. In this context, we represent the position of the $i^{th}$ particle as $\Lambda_i = (\Lambda_{i1}, ..., \Lambda_{in})^T$, and its velocity as $v_i = (v_{i1}, ..., v_{in})^T$. The inertia weight $w \in [0, 1]$ determines the extent to which the particle retains its previous velocity. The best previous position that yielded the maximum fitness value for the $i^{th}$ particle is denoted as $\tilde{\Lambda}_i = (\tilde{\Lambda}_{i1}, ..., \tilde{\Lambda}_{in})^T$, and the best position in the entire swarm that yielded the maximum fitness value is represented by $\hat{\Lambda} = (\hat{\Lambda}_1, ..., \hat{\Lambda}_n)^T$. The initial state of each neurodynamic model is updated using the PSO updating rule, as described in reference \cite{clerc2002particle}.
\begin{align}
   &v_i(j+1) = wv_i(j)+c_1r_1(\tilde{\Lambda}_i-\Lambda_i(j))+c_2r_2(\hat{\Lambda}_i-\Lambda_i(j)) \label{eq:PSO_update_1}\\
    &\Lambda_i(j+1)=\Lambda_i(j)+v_i(j+1).\label{eq:PSO_update_2}
\end{align}
where the iterative index is represented by $j$, while the two weighting parameters are denoted as $c_1$ and $c_2$ and $r_1$ and $r_2$ represent two random values from the interval $[0, 1]$.\\
\\
To achieve global convergence, the diversity of initial neuronal states is crucial. One approach to enhance this diversity is by introducing a mutation operator, which generates a random $\Lambda_i(j + 1)$. This random generation of $\Lambda_i(j + 1)$ helps increase the variation among the initial neuronal states. To measure the diversity of these states, we employ the following function
\begin{equation*}
    d = \frac{1}{n} \sum_{i=1}^n \|\Lambda_i(j+1)-\hat{\Lambda}(j) \|.
\end{equation*}
We utilize the wavelet mutation operator proposed in \cite{ling2008hybrid}, which is performed for the $i$-th particle if $d < \zeta$. The mutation operation is carried out as follows 
\begin{equation}
    \Lambda_i(j+1) = \left\{
    \begin{array}{ll}
        \Lambda_i(j)+\mu(h_i-\Lambda_i(j))  &\mbox{for } \mu > 0 \\
         \Lambda_i(j)+\mu(\Lambda_i(j)-l_i) &\mbox{for } \mu < 0 
    \end{array}
\right.\label{eq:wavelet_mutation}
\end{equation}
where $h_i$ and $l_i$ are the upper and the lower bounds for $\Lambda_i$, respectively. $\zeta >0$ is a given threshold and $\mu$ is defined using a wavelet function 
\begin{equation}
    \mu = \frac{1}{\sqrt{a}}e^{-\frac{\phi}{2a}} \text{cos}(5\frac{\phi}{a})
\end{equation}
\\
When the value of $\mu$ goes to 1, the mutated element of the particle moves towards the maximum value of $\Lambda_i(j+1)$. On the other hand, as $\mu$ approaches -1, the mutated element moves towards the minimum value of  {$\Lambda_i(j+1)$}. The magnitude of $|\mu|$ determines the size of the search space for $\Lambda_i(j+1)$, with larger values indicating a wider search space. Conversely, smaller values of $|\mu|$ result in a smaller search space, allowing for fine-tuning.\\
\\
To achieve fine-tuning, the dilation parameter $a$ is adjusted based on the current iteration $j$ relative to the total number of iterations $T$. Specifically, $a$ is set as a function of $j/T$, with $a = e^{10\frac{j}{T}}$. Additionally, $\phi$ is randomly generated from the interval $[-2.5a, 2.5a]$.\\
\\
The algorithm details are given in Algorithm \ref{alg:neurodynamic_duplex} where $\Lambda = (z,y, \omega)$

\begin{algorithm}
\caption{The neurodynamic duplex}\label{alg:neurodynamic_duplex}
\begin{algorithmic}
 \Initialize{- Let $\Lambda_1(0)$ and $\Lambda_2(0)$ randomly in the feasible region. \\- Let the initial best previous position and best position $\tilde{\Lambda}(0)= \hat{\Lambda}(0) = y=\Lambda(0)$.\\- Set the convergence error $\zeta$. \\ }
\\
\While{$||\Lambda(j+1)-\Lambda(j)|| \geq \epsilon$}\\
{\hspace{0.6cm} Compute the equilibrium points $\Bar{\Lambda}_1(j)$ and $\Bar{\Lambda}_2(j)$ of $\text{RNN}_1$ and $\text{RNN}_2$.}
\If{$f(\Bar{z}_1(j)) < f(\tilde{z}(j))$}
    \State $\tilde{\Lambda}(j+1) = \bar{\Lambda}_1(j)$
\Else
    \State $\tilde{\Lambda}(j+1) = \tilde{\Lambda}(j)$
\EndIf
\If{$f(\Bar{z}_2(j)) < f(\tilde{z}(j))$}
    \State $\tilde{\Lambda}(j+1) = \bar{\Lambda}_2(j)$
\Else
    \State $\tilde{\Lambda}(j+1) = \tilde{\Lambda}(j)$
\EndIf
\If{$f(\tilde{z}(j)) < f(\hat{z}(j))$}
    \State $\hat{\Lambda}(j+1) = \tilde{\Lambda}(j+1)$
\Else
    \State $\hat{\Lambda}(j+1) = \hat{\Lambda}(j)$
\EndIf\\
{\hspace{0.6cm}Compute the value of $\Lambda(j+1)$ following (\ref{eq:PSO_update_1})-(\ref{eq:PSO_update_2}).}
\If{$d < \zeta$}
    \State {Perform the wavelet mutation (\ref{eq:wavelet_mutation})}.
\EndIf\\
{\hspace{0.6cm}j=j+1}
\EndWhile
\end{algorithmic}
\end{algorithm}

\begin{lemma} \cite{uryasev2013stochastic} \label{lemma1}
   Suppose that the objective function $f$ is measurable, and the feasible region $\mathcal{U}$ is a measurable subset, and for any Borel subset $\mathcal{B}$ of $\mathcal{U}$ with positive Lebesgue measure we have $ \prod\limits_{k=1}^{\infty} (1-\mathbb{P}_k(\mathcal{B}))= 0$. Let $\{y(k)\}_{k=1}^{\infty}$ be a sequence generated by a stochastic optimization algorithm. If $\{y(k)\}_{k=1}^{\infty}$ is a nonincreasing sequence, then it converges in probability to the global optimum set.
\end{lemma}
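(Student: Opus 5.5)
We need to prove Lemma \ref{lemma1}, which is cited from \cite{uryasev2013stochastic}. It is the classical global convergence theorem for random search methods in the style of Solis--Wets. The plan is to prove it directly by estimating the probability that the algorithm never enters a neighbourhood of the optimum.

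\textbf{Setup.} Write $f^*=\inf_{\mathcal{U}} f$. For $\varepsilon>0$, define the $\varepsilon$-optimal region
\[
R_\varepsilon=\{u\in\mathcal{U} : f(u)<f^*+\varepsilon\}.
\]
This set is Borel because $f$ and $\mathcal{U}$ are measurable. We assume, as the hypothesis implicitly requires, that $R_\varepsilon$ has positive Lebesgue measure; this holds, for instance, when $f$ is continuous and $\mathcal{U}$ is the closure of its interior. Interpret $\mathbb{P}_k$ as the probability measure from which the $k$-th trial point is sampled, given the past. Interpret the sequence $\{y(k)\}$ as the best-so-far objective values, so $y(k)=\min_{l\le k} f(\text{trial}_l)$. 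This is exactly what Algorithm \ref{alg:neurodynamic_duplex} maintains through $\hat{\Lambda}$, which is why the sequence is nonincreasing.

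\textbf{Key steps.}
\begin{enumerate}
\item By monotonicity, once some trial point lands in $R_\varepsilon$, every later value satisfies $y(l)<f^*+\varepsilon$. Hence
\[
\{y(k)\ge f^*+\varepsilon\}\subseteq\{\text{no trial } l\le k \text{ lies in } R_\varepsilon\}.
\]
\item By conditioning successively on the past (the tower property), the probability of the right-hand event is at most $\prod_{l=1}^k(1-\mathbb{P}_l(R_\varepsilon))$.
\item The hypothesis applied to $\mathcal{B}=R_\varepsilon$ makes this product tend to $0$. Therefore $\mathbb{P}(y(k)-f^*\ge\varepsilon)\to 0$ for every $\varepsilon>0$, which is convergence in probability of $y(k)$ to $f^*$.
\item Since $y(k)$ is monotone and bounded below by $f^*$, it has an almost sure limit. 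Combined with step 3, the limit equals $f^*$ almost surely. Convergence "to the global optimum set" is read in this value sense. It yields convergence of the iterates in distance to $\arg\min f$ under compactness of $\mathcal{U}$ and continuity of $f$, by a standard level-set argument.
\end{enumerate}

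\textbf{Main obstacle.} The delicate step is step 2. The measures $\mathbb{P}_k$ depend on the history of the algorithm (the PSO velocities, $\tilde{\Lambda}$, $\hat{\Lambda}$ and the mutation), so the events "miss $R_\varepsilon$ at step $l$" are not independent. I would handle this by reading the hypothesis as a bound uniform over histories: $\mathbb{P}_l(\mathcal{B}\mid\mathcal{F}_{l-1})\ge p_l(\mathcal{B})$ with $\prod_l(1-p_l)=0$. One then applies the tower property inductively:
\[
\mathbb{P}\bigl(\text{miss up to }k\bigr)=\mathbb{E}\bigl[\mathbf{1}_{\text{miss up to }k-1}\,(1-\mathbb{P}_k(R_\varepsilon\mid\mathcal{F}_{k-1}))\bigr]\le (1-p_k)\,\mathbb{P}\bigl(\text{miss up to }k-1\bigr).
\]

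For the application to the duplex, the wavelet mutation in (\ref{eq:wavelet_mutation}) is what supplies such a history-independent lower bound $p_l>0$ on every positive-measure Borel set. In the paper I would simply cite \cite{uryasev2013stochastic} for the lemma and spell out the above as a sketch.
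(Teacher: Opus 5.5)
The paper gives no proof of this lemma; it is quoted from \cite{uryasev2013stochastic}. Your argument is the standard Solis--Wets proof behind that citation: bound the probability of never sampling $R_\varepsilon$ by $\prod_l(1-p_l(R_\varepsilon))\to 0$, then use monotonicity of the best-so-far values. You also correctly make explicit the hypotheses the quoted statement leaves implicit: sampling bounds conditional on, and uniform over, the history; $y(k)$ dominated by the values at the sampled points; and $v(R_\varepsilon)>0$, equivalently $f^*$ being the essential infimum.

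What I would not sign off on is your remarks about the duplex, which concern the paper's application rather than the lemma. Algorithm \ref{alg:neurodynamic_duplex} records $f$ at RNN equilibria, not at the sampled initial states, so ``this is exactly what the algorithm maintains'' is not automatic. Moreover, $p_l>0$ is not enough: you need $\prod_l(1-p_l)=0$, i.e.\ $\sum_l p_l=\infty$. The wavelet mutation (\ref{eq:wavelet_mutation}) does not obviously provide this, since it fires only when $d<\zeta$ and its amplitude decays like $a^{-1/2}=e^{-5j/T}$.
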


\begin{theorem}\label{th:convergence_two_timescales}
    If the state of the neurodynamic model with a single timescale, described by the following equations
    \begin{align}
&\kappa \frac{dz}{dt} =-\left( \nabla f(z) +  \nabla_z g(z, y)^T (\omega+g(x,z))_+\right) \label{NN_single_timescale_1}\\
&\kappa \frac{dy}{dt} = -\left(\nabla_yg(z, y)^T(\omega+g(z,y))_+ \right),&\\
&\kappa \frac{d\omega}{dt} = -\omega + (\lambda+g(z,y))_+ \label{NN_single_timescale_2}
\end{align}
     converges to an equilibrium point, then the state of the neurodynamic model with two timescales, as described by equations (\ref{NN1})-(\ref{NN2}), globally converges to a partial optimum of problem (\ref{eq:min_f_z_y}).
\end{theorem}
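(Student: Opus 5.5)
The approach is a singular-perturbation (Tikhonov-type) argument that treats the two regimes $\kappa_1<\kappa_2$ and $\kappa_1>\kappa_2$ separately. In each regime one block of variables is fast and the other is quasi-static, so the dynamics (\ref{NN1})--(\ref{NN2}) split into a fast subsystem with the slow variables frozen, followed by a slow reduced system evaluated on the fast equilibrium manifold. The hypothesis that the single-timescale model (\ref{NN_single_timescale_1})--(\ref{NN_single_timescale_2}) converges to an equilibrium gives us two things. First, trajectories of (\ref{NN1})--(\ref{NN2}) remain bounded, since rescaling time by $\kappa_1$ or $\kappa_2$ only changes the speed of each block. Second, the common vector field has an equilibrium set that is nonempty and attracting.

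\emph{Case $\kappa_1<\kappa_2$.} Here $z$ is fast. With $(y,\omega)$ frozen, the $z$-dynamics are
\[
\kappa_1\dot z=-\left(\nabla f(z)+\nabla_z g(z,y)^T(\omega+g(z,y))_+\right).
\]
This is the penalized-gradient flow of the function $f(z)+\tfrac12\|(\omega+g(z,y))_+\|^2$, which is convex in $z$ for fixed $y$ because $g$ is biconvex and $f$ is convex. Its equilibria solve the convex subproblem in $z$ defining $\mathcal{U}_y$. Arguing exactly as in the Lyapunov proof of the single-network lemma of Section \ref{sec:single_recurrent_neural_network} (the Jacobian block in $z$ is negative semidefinite, plus monotonicity of the projection term as in Lemma 4 of \cite{TASSOULI2023765}), the fast subsystem converges to some $\bar z(y,\omega)$. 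On the slow scale, $(y,\omega)$ then evolves under (\ref{NN2}) with $z=\bar z(y,\omega)$. Its limit points satisfy $\nabla_y g^T(\omega+g)_+=0$ and $\omega=(\omega+g)_+$.

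Combining the two conditions, the limit satisfies the full system (\ref{eq:KKT_equivalent_1})--(\ref{eq:KKT_equivalent_2}). By Lemma \ref{th:KKT_equivalence} it is therefore a KKT point of (\ref{eq:min_f_z_y}). Moreover, for the fixed limiting $y$, the $z$-component satisfies the KKT conditions of the convex program $\min\{f(z):z\in\mathcal{U}_y\}$, and for a convex program KKT conditions are sufficient. So $f(z)\le f(\tilde z)$ for all $\tilde z\in\mathcal{U}_y$, which is exactly Definition \ref{def1}.

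\emph{Case $\kappa_1>\kappa_2$.} Here $(y,\omega)$ is fast. The $y$-subsystem with $z$ frozen is the analogous gradient flow of a function convex in $y$, so it converges by the same argument. Along the slow manifold, $z$ then follows the reduced flow. At the joint limit, (\ref{eq:KKT_equivalent_1}) holds with the limiting $y$. Convexity of the $z$-subproblem again upgrades this to a partial optimum in the sense of Definition \ref{def1}.

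\emph{Main obstacle.} The hard step is justifying that the slow reduced dynamics converge, rather than only the fast layer. Specifically, one must show that the composite $\omega$ (or $z$) flow evaluated on the quasi-steady manifold has a valid Lyapunov function, and that $\bar z(y,\omega)$ depends continuously on the slow variables when the fast minimizer may be nonunique.

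My first attempt would sidestep uniqueness entirely. The plan is to use the assumed convergence of the single-timescale model to obtain precompactness of trajectories. Then LaSalle's invariance principle applied to the energy $E$ from the earlier proof shows that $\omega$-limit sets consist of equilibria. Finally, Theorem on equilibria (equilibria of (\ref{NN1})--(\ref{NN2}) $\Leftrightarrow$ KKT points) together with the convexity argument above finishes both cases.
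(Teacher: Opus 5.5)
Your closing step is the paper's entire proof. An equilibrium is a KKT point, and with $y^*$ frozen the $z$-problem is convex, so $\mathcal{L}(z^*,y^*,\omega^*)\le\mathcal{L}(z,y^*,\omega^*)$ together with ${\omega^*}^Tg(z,y^*)\le 0={\omega^*}^Tg(z^*,y^*)$ for $z\in\mathcal{U}_{y^*}$ gives $f(z^*)\le f(z)$, which is Definition \ref{def1}. The paper does nothing beyond this and never analyzes trajectories of (\ref{NN1})--(\ref{NN2}). The only link it implicitly uses between the two models is that their vector fields differ by the positive block scaling $D=\mathrm{diag}(\kappa_1^{-1}I,\kappa_2^{-1}I,\kappa_2^{-1}I)$, and therefore have the same equilibria.

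The machinery you add to actually derive convergence does not work. Block scaling with $\kappa_1\neq\kappa_2$ changes the direction of the vector field, not just its speed. The orbits therefore differ, and boundedness, precompactness or attractivity cannot be transferred from the single-timescale model; only the equilibrium set is shared. Nor is $E$ a Lyapunov function here, for two separate reasons. First, its monotonicity rested on positive semidefinite joint Hessians, but $g$ is only biconvex. Already $y_ke^{-\tilde\lambda_k}$ has a Hessian in $(\tilde\lambda_k,y_k)$ with determinant $-e^{-2\tilde\lambda_k}<0$, which is exactly why the theorem needs its hypothesis at all. Second, even for jointly convex data, $\dot\zeta=DU(\zeta)$ gives $\frac{d}{dt}\|U\|^2=2U^T\nabla U\,DU$ and $(\zeta-\zeta^*)^TD\,(U(\zeta)-U(\zeta^*))$. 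In these, the skew primal--dual coupling blocks $\pm\nabla_z g$ between $z$ and $\omega$ no longer cancel, and a cross term proportional to $\kappa_1^{-1}-\kappa_2^{-1}$, of indefinite sign, survives. So the LaSalle step has nothing to stand on.

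The singular-perturbation layer has its own errors.
\begin{itemize}
\item With $(y,\omega)$ frozen, equilibria of the fast $z$-flow are minimizers of $f(z)+\frac{1}{2}\|(\omega+g(z,y))_+\|^2$. For an arbitrary frozen $\omega$ these need not lie in $\mathcal{U}_y$, let alone minimize $f$ over it; taking $\omega=0$ gives a quadratic-penalty point, typically infeasible.
\item This minimum also need not be attained, and it is generally not unique, since $\tilde\lambda,\tilde\beta,\tilde\pi$ do not enter $f$.
\item When $\kappa_1>\kappa_2$ the fast block is $(y,\omega)$, not $y$ alone. The $\omega$-equation is an ascent step, so this is a saddle flow rather than a gradient flow.
\item As you acknowledge, convergence of the slow reduced flow is never established.
\end{itemize}
What survives of your proposal, and what the paper actually proves, is the static statement that every equilibrium of (\ref{NN1})--(\ref{NN2}) is a partial optimum. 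The convergence part has to be read, as the paper implicitly does, as supplied by the hypothesis. Deriving it genuinely would need an ingredient you do not have, for instance a Lyapunov function weighted by the time scales, or a Tikhonov argument with a uniformly exponentially stable fast manifold.
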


\begin{proof}
We recall the Lagrangian function of (\ref{eq:min_f_z_y})

\begin{equation*}
    \mathcal{L}(z,y,\omega) = f(z) + \omega^Tg(z,y)
\end{equation*}
An equilibrium point $(z^*,y^*,\omega^*)$ of (\ref{NN_single_timescale_1})-(\ref{NN_single_timescale_2}) corresponds to a KKT point of (\ref{eq:min_f_z_y}). We fix $y^*$, and take $z \in \mathcal{U}_{y^*}$, (\ref{eq:min_f_z_y}) becomes a convex optimization problem and we have

\begin{equation*}
    \mathcal{L}(z^*,y^*,\omega^*) \leq \mathcal{L}(z,y^*,\omega^*)
\end{equation*}
which is equivalent to

\begin{equation*}
    f(z^*) + {\omega^*}^Tg(z^*,y^*) \leq f(z) + {\omega^*}^Tg(z,y^*)
\end{equation*}
As ${\omega^*}^Tg(z,y^*) \leq {\omega^*}^Tg(z^*,y^*) = 0$, we have $f(z^*) \leq f(z)$. By Definition \ref{def1}, $(z^*,y^*)$ is a partial optimum of \ref{eq:min_f_z_y}.
\end{proof}

\begin{theorem}
The duplex of two two-timescale neural networks in Figure \ref{fig:block_diagram} is globally convergent to a global optimal solution of problem (\ref{eq:min_f}).
\end{theorem}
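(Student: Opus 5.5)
The plan is to reduce the statement to Lemma \ref{lemma1}, using Theorem \ref{th:convergence_two_timescales} to control what each recurrent network returns. There are three steps: make the sequence produced by Algorithm \ref{alg:neurodynamic_duplex} fit that lemma, show the sequence is monotone, and check the hypothesis on Borel sets. Here ``global optimal solution of problem (\ref{eq:min_f})'' is read as a global optimum of the biconvex problem (\ref{eq:min_f_z_y}), and ``globally convergent'' as convergence in probability, which is what Lemma \ref{lemma1} delivers.

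\textbf{Step 1 (inner runs).} At every outer iteration $j$, each of $\text{RNN}_1$ ($\kappa_1>\kappa_2$) and $\text{RNN}_2$ ($\kappa_1<\kappa_2$) is started from the current PSO position $\Lambda_i(j)$. I will use the earlier results to argue that each run converges to an equilibrium. By the theorem characterizing equilibria, these equilibria are KKT points. By Theorem \ref{th:convergence_two_timescales}, they are partial optima of (\ref{eq:min_f_z_y}). In particular the limits $\bar\Lambda_1(j)$ and $\bar\Lambda_2(j)$ are feasible and belong to $\mathcal{U}$, so $f$ is well defined on them and the comparisons in the algorithm are legitimate.

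\textbf{Step 2 (monotonicity).} Define the sequence required by Lemma \ref{lemma1} as $y(j)=f(\hat z(j))$, the objective value at the swarm best. From the update rules of Algorithm \ref{alg:neurodynamic_duplex}, $\tilde\Lambda(j+1)$ is replaced only if the new value of $f$ is strictly smaller, and otherwise it is kept. The same holds for $\hat\Lambda(j+1)$. An induction on $j$ then gives $f(\hat z(j+1))\le f(\hat z(j))$, so $\{y(j)\}$ is nonincreasing. Measurability is routine: $f$ is continuous, being a sum of exponentials, and $\mathcal{U}$ is closed because $g$ is continuous.

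\textbf{Step 3 (the obstacle).} The hard step is verifying $\prod_{k=1}^{\infty}(1-\mathbb{P}_k(\mathcal{B}))=0$ for every Borel $\mathcal{B}\subset\mathcal{U}$ of positive Lebesgue measure. Here $\mathbb{P}_k$ is the law of the initial states generated at iteration $k$. I would obtain this from the wavelet mutation (\ref{eq:wavelet_mutation}). Whenever the diversity satisfies $d<\zeta$, the mutation uses $\phi$ drawn uniformly from $[-2.5a,2.5a]$, so $\mu$ has a distribution that covers the whole box $[l_i,h_i]$ with a density bounded below. I would therefore aim to show $\mathbb{P}_k(\mathcal{B})\ge \delta\,\mathrm{Leb}(\mathcal{B})$ for some $\delta>0$ along infinitely many $k$. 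That would give $\sum_k \mathbb{P}_k(\mathcal{B})=\infty$, and hence the product vanishes.

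The gap in this argument is that mutation is only triggered when $d<\zeta$, and the dilation $a=e^{10j/T}$ shrinks the effective jump sizes over time. I would first handle this by assuming, or enforcing in the algorithm, that mutation recurs infinitely often with a uniform lower density on the feasible box. Failing that, I would invoke the randomness of $r_1$ and $r_2$ in the PSO update (\ref{eq:PSO_update_1}) to argue the positions keep a spread. There is a second, lesser point: a positive-measure set of initial states reached this way must contain the basin of a global optimum, which I expect to show using continuity of the flow of (\ref{NN1})--(\ref{NN2}). With this in hand, Lemma \ref{lemma1} yields that $\{y(j)\}$, and so $\hat\Lambda(j)$, converges in probability to the global optimum set.
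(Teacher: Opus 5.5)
Your decomposition matches the paper's proof: partial optima from Theorem \ref{th:convergence_two_timescales}, a monotone best-so-far value, then Lemma \ref{lemma1}. Step 2 is fine; the paper's ``monotonically increasing'' should indeed read nonincreasing. The genuine gap is Step 3, and neither of your repairs closes it for the algorithm as specified.

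\begin{itemize}
\item Mutation fires only when $d<\zeta$, and its amplitude satisfies $|\mu|=O(a^{-1/2})=O(e^{-5j/T})$. So mutated points cover the box $[l_i,h_i]$ only in early iterations. Afterwards they stay in a shrinking neighbourhood of $\Lambda_i(j)$, and no bound $\mathbb{P}_k(\mathcal B)\ge\delta\, v(\mathcal B)$ along infinitely many $k$ is available.
\item The scalars $r_1,r_2$ in (\ref{eq:PSO_update_1}) only move a particle inside a two-dimensional parallelogram spanned by $c_1(\tilde\Lambda_i-\Lambda_i(j))$ and $c_2(\hat\Lambda-\Lambda_i(j))$, translated by $wv_i(j)$. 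These vectors shrink as the swarm contracts, which is exactly why plain PSO is not a global-search method in the Solis--Wets sense.
\end{itemize}

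Nothing in the algorithm forces $\sum_k\mathbb{P}_k(\mathcal B)=\infty$. In the collapsing regime the product condition can fail for sets $\mathcal B$ away from the swarm. Assuming or enforcing recurrent global sampling proves a statement about a different algorithm. The paper does not supply the missing ingredient either. It infers $\mathcal U\subseteq\bigcup_{i,j}\mathcal M_{i,j}$ from the fact that mutation keeps initial states in $\mathcal U$, which only gives $\mathcal M_{i,j}\subseteq\mathcal U$. A covering of $\mathcal U$ by supports would in any case be weaker than $\prod_k(1-\mathbb{P}_k(\mathcal B))=0$.

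Two points you treat as routine are also unproved.

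\emph{The basin issue is essential, not lesser.} The Solis--Wets theorem behind Lemma \ref{lemma1} needs each new iterate to be no worse than the point just sampled; the lemma as quoted leaves this implicit. Algorithm \ref{alg:neurodynamic_duplex} compares only the equilibria $\bar\Lambda_i(j)$, not the sampled initial states. The flow (\ref{NN1})--(\ref{NN2}) is not a descent method for $f$: it also moves $y$, so its limit $(z^*,y^*)$ is optimal only over $\mathcal U_{y^*}$, which need not contain the starting $z$. Continuity of the flow on finite time intervals says nothing about limits as $t\to\infty$. You would need something like an open basin of attraction around a global minimizer, and nothing establishes one.

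\emph{Step 1 presumes every run converges.} Theorem \ref{th:convergence_two_timescales} takes convergence as a hypothesis; its proof only shows that KKT points are partial optima. The Lyapunov argument for (\ref{eq:KKT_1})--(\ref{eq:KKT_2}) relied on $\nabla^2 g\succeq 0$, which fails here. In the variables $(y_k,\tilde\lambda_k)$, the Hessian of $y_ke^{-\tilde\lambda_k}+\sum_i e^{-\tilde\lambda_k+\tilde\beta_i^k+\log\mu_i^k}$ has a zero $(1,1)$ entry and determinant $-e^{-2\tilde\lambda_k}<0$.

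As written, your argument, like the paper's, gives the theorem only under added hypotheses:
\begin{itemize}
\item convergence of each run;
\item a sampling law with $\prod_k(1-\mathbb{P}_k(\mathcal B))=0$ for every positive-measure $\mathcal B\subseteq\mathcal U$;
\item either evaluation of the samples themselves, or a positive-measure set of starting points whose limits are $\epsilon$-optimal.
\end{itemize}
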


\begin{proof}
By Theorem \ref{th:convergence_two_timescales}, the two-timescale neurodynamic models $\text{RNN}_1$ and $\text{RNN}_2$ are proven to converge to a partial optimum. From Algorithm \ref{alg:neurodynamic_duplex}, the solution sequence is generated as follows

\begin{align*}
\begin{cases}
\hat{\Lambda}(j+1) = \tilde{\Lambda}(j+1) \text{ if } f(\tilde{z}(j)) < f(\hat{z}(j))\\
\hat{\Lambda}(j+1) = \hat{\Lambda}(j) \text{ otherwise }
\end{cases}
\end{align*}
We observe that the generated solution sequence is monotonically increasing $\{f(\tilde{\Lambda}(j)) \}_{j=1}^{\infty}$. 
Let $\mathcal{M}_{i,j}$ represent the supporting set of the initial state of $\text{RNN}i$ at iteration $j$. According to equation (\ref{eq:wavelet_mutation}), the mutation operation ensures that the initial states of the recurrent neural networks are constrained to the feasible region $\mathcal{U}$. Therefore, for every iteration index $J \geq 1$, the supporting sets satisfy the following condition:
\begin{equation*}
    \mathcal{U} \subseteq \mathcal{M} = \bigcup_{j=1}^J \bigcup_{i=1}^2 \mathcal{M}_{i,j}.
\end{equation*}
Consequently, we have $v(\mathcal{U}) = v(\mathcal{M}) > 0$.\\
By Lemma \ref{lemma1}, we have 
\begin{equation*}
    \lim_{j -> \infty} \mathbb{P}({\hat{\Lambda}(j) \in \Phi}) = 1
\end{equation*}
where $\Phi$ is the set of the global optimal solutions of (\ref{eq:min_f}). The conclusion follows.
\end{proof}

\noindent Similarly as developed in the previous Section
\ref{sec:single_recurrent_neural_network}, the neurodynamic duplex allows
solving multiple instances of optimization problems of the form
(\ref{JCP_NS-dep^log}) by considering conditional RNNs for $\text{RNN}_1$ and
$\text{RNN}_2$. Let $\theta \in \Theta$ be the parameter representing the
problem data, as defined for the single recurrent neural network described by
equations (\ref{eq:KKT_1})-(\ref{eq:KKT_2}). We define the duplex of recurrent
neural networks as in equations (\ref{conditional_NN1})-(\ref{conditional_NN2})
\begin{flalign}
    &\kappa_1 \frac{dz}{dt} =-\left( \nabla f_{\theta}(z) + \nabla_z g_{\theta}(z, y)^T (\omega+g_{\theta}(x,z))_+\right),& \label{conditional_NN1}\\
    &\kappa_2 \frac{dy}{dt} = -\left(\nabla_y g_{\theta}(z, y)^T(\omega + g_{\theta}(z,y))_+ \right),&\\
    &\kappa_2 \frac{d\omega}{dt} = -\omega + (\lambda+g_{\theta}(z,y))_+ ,&\label{conditional_NN2}
\end{flalign}
The flowchart of the neurodynamic duplex, as described in the Algorithm \ref{alg:neurodynamic_duplex}, using conditional RNNs is presented in Figure
\ref{fig:block_diagram_flowchart_PSO}.
\begin{figure}[!htb]
 \hspace{0.75cm}
 \includegraphics[scale=0.75]{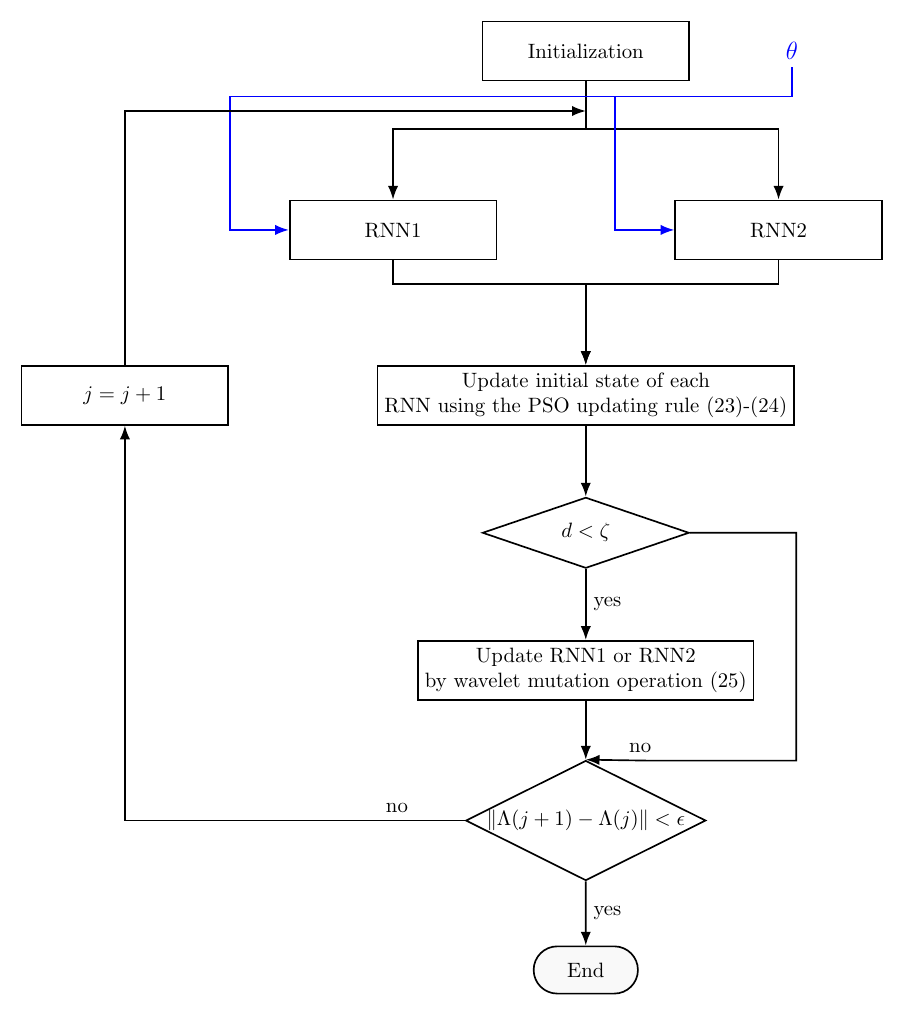}
 \caption{A block diagram of the neurodynamic duplex for the neural network (\ref{conditional_NN1})-(\ref{conditional_NN2})}
\label{fig:block_diagram_flowchart_PSO}
\end{figure}

\section{Numerical experiments}
\label{sec:numerical_experiments}

\noindent We consider three geometric optimization problems to evaluate the
performance of our neurodynamic approaches. All the algorithms in this Section
are implemented in Python. We run our algorithms on Intel(R) Core(TM) i7-10610U
CPU @ 1.80GHz. The random instances are generated with numpy.random, and we
solve the ODE systems with solve\_ivp of scipy.integrate. The deterministic
equivalent programs are solved with the package GEKKO \cite{beal2018gekko} and
the gradients and partial derivatives are computed with autograd.grad and
autograd.jacobian. For the following numerical experiments, we set
$\gamma^k_{1}=2$,  $\gamma^k_{2}=2$ and the error tolerance for the neurodynamic
duplex $\zeta = 10^{-4}$. In Subsection \ref{subsec:numerical_exp_2}, we
evaluate the quality of our neurodynamic duplex by comparing the obtained
solutions with the ones given by the Convex Alternate Search (CAR) from
\cite{biconvex}. The gap between the two solutions is computed as follows GAP =
$\frac{\text{Sol}_{\text{CAR}} -
\text{Sol}_{\text{Duplex}}}{\text{Sol}_{\text{CAR}}}$, where
$\text{Sol}_{\text{CAR}}$ and $\text{Sol}_{\text{Duplex}}$ are the solutions
obtained using the CAR and the neurodynamic duplex, respectively. For the
neurodynamical duplex, we take $\frac{\kappa_1}{\kappa_2}=0.1$ for the first
dynamical neural network and $\frac{\kappa_1}{\kappa_2}=10.0$ for the second
one. We introduce conditional RNNs in the neurodynamic duplex using
\textit{cond\_rnn} library \url{https://github.com/philipperemy/cond_rnn} to
solve multiple instances of the problem. To evaluate the robustness of our
approaches, we generate a set of 100 out-of-sample random scenarios of the
stochastic constraints for each problem, and we visualize the number of
scenarios for which the constraints were not respected. We call such scenarios
violated scenarios (VS).
\subsection{Uncertainty Sets with First Two Order Moments}
\subsubsection{A three-dimension shape optimization problem}
\noindent We first consider a transportation problem involving the shifting of
grain from a warehouse to a factory. The grain is transported within an open
rectangular box, with dimensions of length $x_1$ meters, width $x_2$ meters, and
height $x_3$ meters, as illustrated in Figure \ref{fig1}. The objective of the
problem is to maximize the volume of the rectangular box, given by the product
of its length, width, and height ($x_1 x_2 x_3$). However, two constraints must
be satisfied. The first constraint relates to the floor area of the box, and the
second constraint relates to the wall area. These constraints are necessary to
ensure that the shape of the box aligns with the requirements of a given truck.
In our analysis, we assume that the wall area $A_{wall}$ and the floor area
$A_{floor}$ are random variables.
\begin{figure}[!htb]
\centering
\includegraphics[width=0.55\columnwidth]{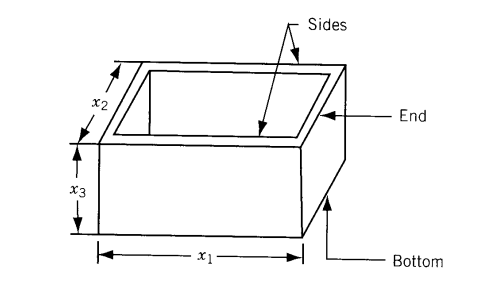}
\caption{3D-box shape \cite{rao2009geometric}}
\label{fig1}
\end{figure}
We formulate our shape optimization problem as follows 
\begin{center}
\begin{eqnarray}
&\min\limits_{x \in {\mathbb{R}^3_{++}}} & x_1^{-1} x_2^{-1}x_3^{-1}, \label{eq:shape_optimization_pb_1} \\
&\text{s.t}& 
       \inf\limits_{\mathcal{F} \in \mathcal{D} }\mathbb{P}_{\mathcal{F}}\left(  \frac{1}{A_{wall}} (2x_3x_2 + 2x_1x_3) \leq 1, \frac{1}{A_{floor}}x_1x_2\leq 1\right)  \geq 1-\epsilon. \notag
\end{eqnarray}
\end{center}
where $\mathcal{F}$ is the joint distribution for $\frac{1}{A_{wall}}$ and
$\frac{1}{A_{floor}}$ and $\mathcal{D}$ is the uncertainty set for the
probability distribution $\mathcal{F}$. We solve problem
(\ref{eq:shape_optimization_pb_1}) when the uncertainty set is equal to
$\mathcal{D}^2$ using the dynamical neural network
(\ref{eq:KKT_1})-(\ref{eq:KKT_2}). For the numerical experiments, we take the
mean and the covariance describing the uncertainty sets for $\frac{1}{A_{wall}}$
$m_{wall} =0.05 $, $\sigma_{wall}=0.01$, respectively and for
$\frac{1}{A_{floor}}$ $m_{floor} =0.5 $, $\sigma_{floor}=0.1$,  respectively.
We recapitulate the obtained results in Table \ref{table1}. Columns 1, 2 and 3
give the optimal value, the CPU time and the number of violated scenarios (VS)
in the independent case, respectively. Columns four, five and six show the
optimal value, the CPU time and the number VS in the dependent case,
respectively. The dynamic neural network covers well the risk region in both
cases. Figure \ref{fig:convergence_shape_optimization_problem} show the
convergence of the state variables.

\begin{table}
 \hspace{1.8cm}   \begin{tabular}{ccccccccccccc}
  \firsthline
 \multicolumn{3}{c}{Independent case} & & \multicolumn{3}{c}{ Dependent case}  \\
\cline{1-3} \cline{5-7} 
    Obj value& CPU Time & VS & & Obj value &CPU Time & VS   \\ \hline
    $0.296$  &0.43 &0&  & $0.298$&0.46 & 0   \\
\hline
\end{tabular}
\caption{{Results of solving problem (\ref{eq:shape_optimization_pb_1}) when $\mathcal{D}=\mathcal{D}^2$}} \label{table1}
\end{table}
\begin{figure}
 \hspace{-0.5 cm}\begin{subfigure}{.5\textwidth}
  \centering
 \includegraphics[scale = 0.4]{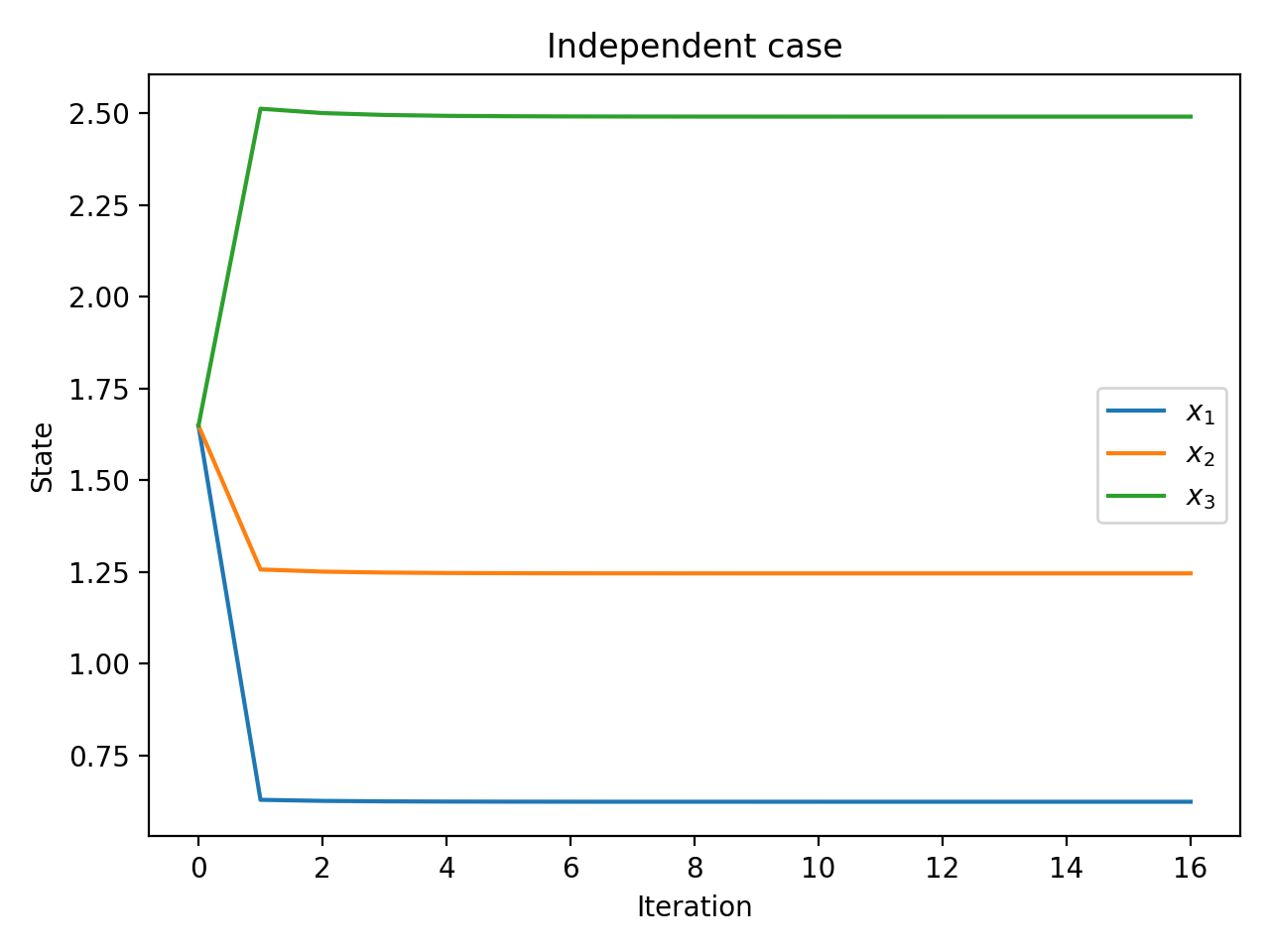}
\end{subfigure}%
\hspace{0.7 cm}\begin{subfigure}{.5\textwidth}
  \centering
 \includegraphics[scale = 0.4]{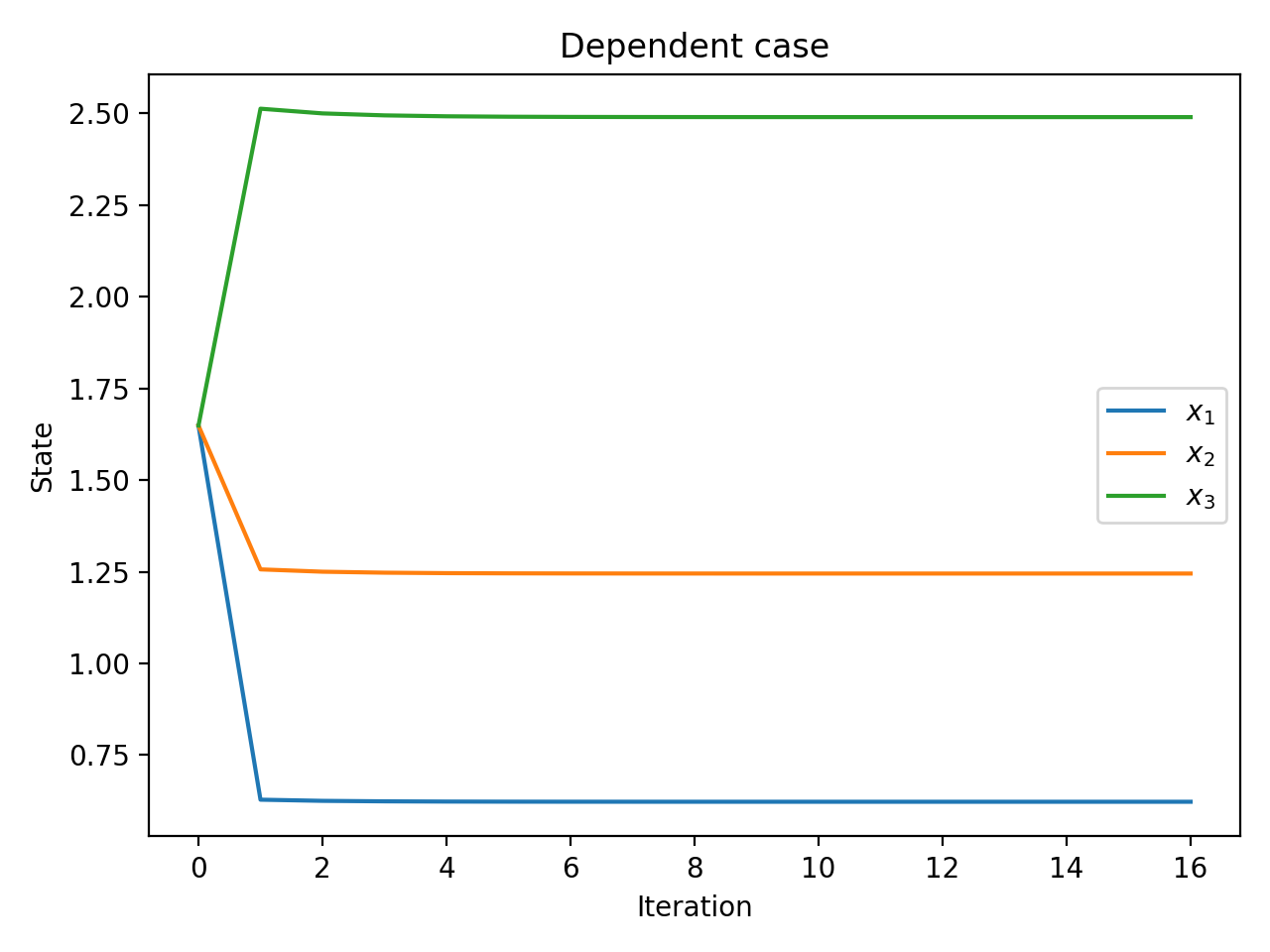}
\end{subfigure}%
\caption{{Transient behaviors of the state variables}}
\label{fig:convergence_shape_optimization_problem}
\end{figure}

\subsubsection{Multidimensional shape optimization problem}
\noindent To further assess the performance of our dynamical neural network, we use the multidimensional shape optimization problem with joint chance constraints from \cite{liu2022distributionally}.

\begin{eqnarray}
&\min_{x \in \mathbb{R}_{++}^M} & \prod_{i=1}^m x_i^{-1}, \notag\\
&\text{s.t}& 
          \inf\limits_{\mathcal{F} \in \mathcal{D} }\mathbb{P}_{\mathcal{F}}\left(  \sum\limits_{j=1}^{m-1}(\frac{m-1}{{A_{wall}}_j}x_1 \prod_{i=1, i\neq j}^m x_i)\leq 1,  \frac{1}{A_{floor} }\prod_{j=2}^m x_j  \leq 1 \right)  \geq 1-\epsilon, \label{eq:multidimensional_shape_optimization} \\ 
& &         \frac{1}{\gamma_{i,j} }x_ix_j^{-1} \leq 1, 1\leq i\neq j \leq m. \notag
\end{eqnarray}
In our numerical experiments, we fixed the following parameters
$\frac{1}{\gamma_{i,j}}=0.5$ and $\epsilon = 0.15$. The inverse of floor's area
($\frac{1}{A_{floor}}$) and the inverse of wall area ($\frac{1}{A_{wall_j}}$)
for each $j=1, ..., m$ were considered as random variables.  {To define the
uncertainty set $\mathcal{D}^2$, we generate the components of $\mu$ uniformly
in $[1.0/40.0, 1.0/20.0]$ and the components of the matrix $\Sigma$  uniformly
in $[0.001, 0.01]$ } We test the robustness of the different approaches by
creating 100 random samples of the variables $\frac{1}{A_{wall_j}}$ and
$\frac{1}{A_{floor}}$. We then examine if the solutions meet the constraints of
(\ref{eq:multidimensional_shape_optimization}) for all 100 cases for the normal
distribution. If the solutions are not feasible for a particular case, it is
referred to as a violated scenario (VS).\\
\\
We first solve (\ref{eq:multidimensional_shape_optimization}) for $m=5$ and when
the uncertainty set is $\mathcal{D}^2$ in the independent case for different
initial points, we observe that the dynamical neural network
(\ref{eq:KKT_1})-(\ref{eq:KKT_2}) converges to the same final value
independently from the starting value as shown in Figure
\ref{fig:convergence_joint}.

\begin{figure}[!htb] 
\hspace{1.0 cm}{
\begin{center}
 \includegraphics[scale=0.5]{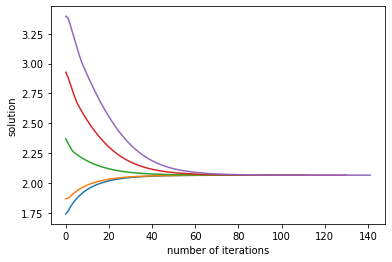}
 \caption{Convergence of the dynamical neural network (\ref{eq:KKT_1})-(\ref{eq:KKT_2}) for different initial points for (\ref{eq:multidimensional_shape_optimization}).}
 \label{fig:convergence_joint}
 \end{center}}
\end{figure}

\noindent Now we solve (\ref{eq:multidimensional_shape_optimization}) for known first-order moments of $\frac{1}{A_{floor}}$ and $\frac{1}{A_{wall_j}}$ for both the dependent and the independent case. We present the obtained results in Table \ref{tab:results_different_m_values}. We observe again that the dependent case is more conservative compared to the independent one since the final values of the objective functions are lower. Nevertheless, the dependent case covers the risk area better since the number of violated scenarios is lower compared to the independent case.

\begin{scriptsize}
\begin{table}[t]
    \hspace{1.0 cm} \begin{tabular}{lllllllllllll}
  \hline
  $m$ & & \multicolumn{3}{c}{Independent case} &&  \multicolumn{3}{c}{ Dependent case}\\
 \cline{3-5}\cline{7-9}
  & & Obj value & CPU Time & VS &  & Obj value & CPU Time &VS  \\ 
   \cline{3-5}\cline{7-9}
 3 & &1.03 &1.05 &4 & &  1.30 & 1.39 & 0 &\\
   \hline
 5 & &2.09 &5.11  &3& & 2.15 & 5.20 & 1 & \\
   \hline
  10 && 14.79  & 4.83  & 2 && 15.10 & 5.04 &0& \\
   \hline
   15  & & 7.76 & 47.80 & 3 & & 7.99 & 58.04 & 1 &\\
   \hline
   20 & & 10.68 & 97.72 & 2 && 10.87 & 100.91 & 0 & \\
   \hline
\end{tabular}
    \caption{Results for different values of $m$}
    \label{tab:results_different_m_values}
\end{table}
\end{scriptsize}

\subsection{Uncertainty Sets with Known First Order Moment and Nonnegative Support}
\label{subsec:numerical_exp_2}

\subsubsection{A generalized shape optimization problem}

\noindent We solve (\ref{eq:multidimensional_shape_optimization}) when the
uncertainty set is $\mathcal{D}^3$ for both the independent and the dependent
case.\\
\\
For the numerical experiments, we take $\epsilon=0.2$. We solve problem (\ref{eq:multidimensional_shape_optimization}) using the neurodynamic duplex in the dependent case. We recapitulate the obtained results in Table \ref{tab:results_different_m_values_D_3}. Column 1 gives the number of variables $m$. Columns 2, 3 and 4 give the objective value, the CPU time and the number of VS, with respect to the normal distribution, in the independent case, respectively. Columns 5, 6, and 7 provide the objective value, CPU time, and the number of VS for the dependent case, respectively. We observe that the problem with dependent variables is more conservative. Nevertheless, the solution in this case covers the risk area well, as the number of VS is equal to 0 for all values of $m$. 
\begin{scriptsize}
\begin{table}[t]
    \hspace{1.0 cm} \begin{tabular}{lllllllllllll}
  \hline
  $m$& & \multicolumn{3}{c}{Independent case} &&  \multicolumn{3}{c}{ Dependent case}\\
 \cline{3-5}\cline{7-9}
  & & Obj value & CPU Time & VS &  & Obj value & CPU Time &VS  \\ 
   \cline{3-5}\cline{7-9}
   3& &0.204&2.28 &3  & & 0.491& 10.12& 0 \\
   \hline
  5&&1.03&6.25 & 2&&1.82& 98.68&  0\\
   \hline
   10 &&6.99 &15.26& 2 &&9.79 &86.35 &0  \\
   \hline
   15 &&18.43 &23.84 &3  &&23.45  &201.13& 0  \\
   \hline
   20 &&32.09&94.76& 5  &&38.71 &744.26&0   \\
   \hline
    30 &&42.37&100.23& 3  &&51.56&1155.42 &0   \\
   \hline
\end{tabular}
    \caption{(\ref{eq:multidimensional_shape_optimization}) for different values of $m$ for $\mathcal{D}= \mathcal{D}^3$}
    \label{tab:results_different_m_values_D_3}
\end{table}
\end{scriptsize}
\\
Now we additionally solve problem (\ref{eq:multidimensional_shape_optimization}) using a stochastic approach with the assumption that the random variables follow a normal distribution \cite{TASSOULI2023765} for $m=5$. In order to compare the solutions obtained with the stochastic and the robust approaches, we evaluate the robustness of the solutions for different hypotheses on the true distribution of the random parameters, i.e., the uniform distribution, the normal distribution, the log-normal distribution, the logistic distribution and Gamma distribution. The obtained results are presented in Table \ref{tab:violated_scenarios} which gives the number of violated scenarios for both the distributed normal solutions and the robust ones and the objective value obtained by each solution. We can infer that the distributionally robust approaches are a
conservative approximation of the stochastic programs. We observe that the solutions obtained by the nonnegative support are more conservative compared to the stochastic ones. Notice that the distributionally robust solutions are more robust, i.e., the number of VS when the true distribution is the Logistic distribution is equal to 23 and 19 for the nonnegative support solutions and is equal to 0 for the robust solutions.

\begin{table}[htbp]
    \hspace{0.5 cm} \begin{tabular}{l|llllllll}
  \hline
  & & &\multicolumn{2}{c}{Normal solutions} & & \multicolumn{2}{c}{ Robust solutions }   \\
    \cline{4-5} \cline{7-8}
  & &&Independent & Dependent   &&Independent & Dependent \\ \hline
& Objective Value && 0.86 & 0.99 && 2.43 & 4.14 &\\
\cline{2-9}
  Number & Uniform distribution& & 22 & 15 && 0 & 0\\
   
of  & Normal distribution& & 18 & 11  && 1  &0 \\
   
 violated & Log-normal distribution && 7 & 4 && 2 & 1 \\
  
 scenarios & Logistic distribution& & 23 & 19 && 0 &0 \\
 
  &  Gamma distribution & & 16 & 12 && 2 & 2 \\
   \hline
\end{tabular}
    \caption{Number of violated scenarios for the stochastic and the robust solutions}
    \label{tab:violated_scenarios}
\end{table}

\subsubsection{Maximizing
the worst user signal-to-interference noise ratio}
\noindent We consider the problem of maximizing
the worst user signal-to-interference noise ratio (SINR) for
Massive Multiple Input Multiple Output (MaMIMO) systems subject to antenna assignment and multiuser interference constraints taken from \cite{Adasme_Lisser_2023} and given by 
\begin{eqnarray}
&\max\limits_{p \in {\nbR_{++}^K}} \min\limits_{i \in \mathcal{U}} &  \frac{p_i|g_i^Hg_i|^2}{\sum\limits_{j \in \mathcal{U}, j \neq i}p_j|g_i^Hg_j|^2 + |\sigma_i|^2}, \label{eq:MaMIMO_max_min_obj}  \\
&\text{s.t}& P_{min} \leq p_i \leq P_{max}, \forall i \in \mathcal{U}, \label{eq:MaMIMO_max_min_constraint}
\end{eqnarray}
where $p_i$ is the power to be assigned for each user $i \in \mathcal{U}$. $g_i \in \mathbb{C}^{T \times 1}$, $g_i^H \in \mathbb{C}^{1 \times T}$ and $\sigma_i^2$  are the
beam domain channel vector associated to user $i \in \mathcal{U}$,
its Hermitian transpose and Additive White Gaussian Noise
(AWGN), respectively.\\
\\
Let $a_{ij} = |g_i^Hg_j|^2|g_i^Hg_i|^{-2} $ and $b_i =  |\sigma_i|^2 |g_i^Hg_i|^{-2} $, we derive a geometric reformulation of (\ref{eq:MaMIMO_max_min_obj})-(\ref{eq:MaMIMO_max_min_constraint})

\begin{eqnarray} 
&\min\limits_{p \in {\nbR_{++}^K}, w \in \nbR_{++} } &w^{-1} \notag\\
&\text{s.t}& 
      \sum\limits_{j \in \mathcal{U}, j \neq i} a_{ij}p_jp_i^{-1}w + b_ip_i^{-1}w \leq 1 \quad \forall i \in \mathcal{U} \notag\\
& &   P_{min} \leq p_i \leq P_{max} \quad \forall i \in \mathcal{U} \notag
\end{eqnarray}
We assume that the coefficients $a_{ij}$ and $b_i$ are independent random variables and we propose the following optimization problem with individual and joint chance constraints

\begin{eqnarray} 
&\min\limits_{p \in {\nbR_{++}^K}, w \in \nbR_{++} } &w^{-1},\notag  \\
&\text{s.t}& 
     \inf\limits_{\mathcal{F}_i \in \mathcal{D}_i } \mathbb{P}_{\mathcal{F}_i } \left\{  \sum\limits_{j \in \mathcal{U}, j \neq i} a_{ij}p_jp_i^{-1}w + b_ip_i^{-1}w \leq 1 \right\} \geq 1-\epsilon_i,  \forall i \in \mathcal{U} \label{POI}\tag{$POI$}\\
& &       P_{min} \leq p_i \leq P_{max} \quad \forall i \in \mathcal{U}.\notag
\end{eqnarray}
and 

\begin{eqnarray} 
&\min\limits_{p \in {\nbR_{++}^K}, w \in \nbR_{++} } &w^{-1}, \notag  \\
&\text{s.t}& 
     \inf\limits_{\mathcal{F} \in \mathcal{D} } \mathbb{P}_{\mathcal{F}}  \left\{  \sum\limits_{j \in \mathcal{U}, j \neq i} a_{ij}p_jp_i^{-1}w + b_ip_i^{-1}w \leq 1, \forall i \in \mathcal{U}  \right\} \geq 1-\epsilon \label{POJ}\tag{$POJ$}\\
& &  P_{min} \leq p_i \leq P_{max} \quad \forall i \in \mathcal{U}\notag
\end{eqnarray}
We assume that the uncertainty set for the distributionally robust problems
(\ref{POI}) and (\ref{POJ}) is $\mathcal{D}^3$. We fix $\epsilon=0.2$. We first
solve problem (\ref{POJ}) for $K=10$. Figure \ref{fig:powersjoint} shows the
convergence of the power variables. Next, we solve (\ref{POI}) and (\ref{POJ})
for different values of the number of users $K$. Table
\ref{tab:results_different_K_values} presents the obtained results. Column 1
gives the number of users $K$. Columns 2 and 3 give the optimal value and the
number of VS, with respect to the normal distribution, for (\ref{POI}),
respectively. Columns four and five show the optimal value and the number of VS
for (\ref{POJ}), respectively. As observed in the previous Section
\ref{sec:two-time_scale_neurodynamic_duplex}, the use of joint constraints leads
to a more conservative minimization problem but covers well the risk area
compared to the problem with individual constraints since the number of VS is
lower. 
\begin{figure}
\hspace{1.0 cm}{
\begin{center}
 \includegraphics[scale=0.5]{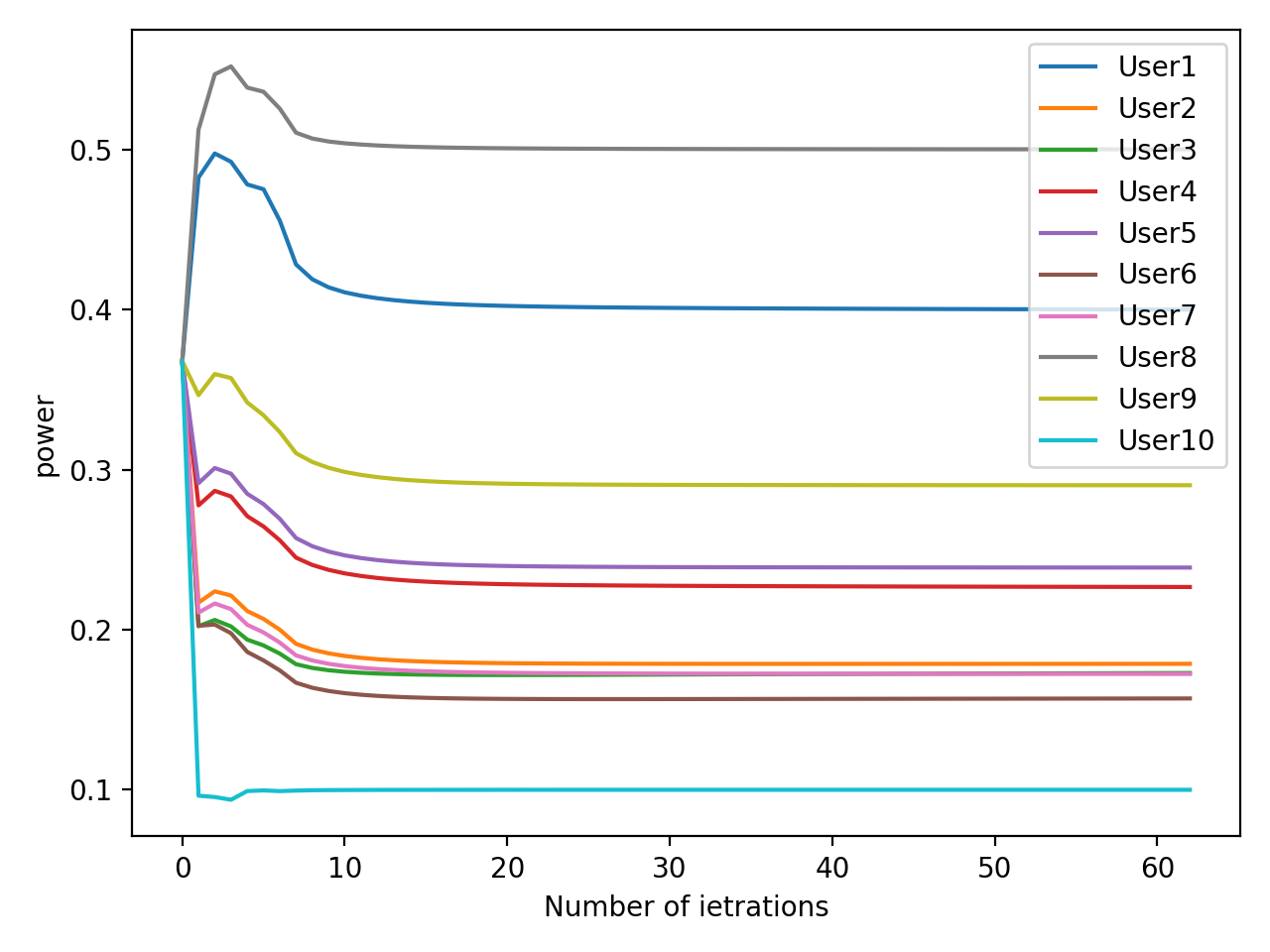}
 \caption{Convergence of the power variables}
  \label{fig:powersjoint}
 \end{center}}
\end{figure}\\

\begin{scriptsize}
\begin{table}[t]
    \hspace{4.0 cm} \begin{tabular}{lllllllllll}
  \hline
  $K$ & & \multicolumn{2}{c}{Individual constraints} && \multicolumn{2}{c}{ Joint constraints}\\
 \cline{3-4}\cline{6-7}
& & Obj value & VS & & Obj value &VS  \\ 
   \cline{3-4}\cline{6-7}
5& &27.27&5 & &29.07& 0  \\
   \hline
10 &&47.36&4 &&50.23&0  \\
   \hline
15&&66.03& 5&&68.76 &1 \\
   \hline
20 &&123.48&3 &&127.43 &0  \\
   \hline
\end{tabular}
    \caption{Results for different values of $K$}
    \label{tab:results_different_K_values}
\end{table}
\end{scriptsize}

\noindent Now, we compare our neurodynamic duplex to the CAR method. We recapitulate the obtained results in Table \ref{tab:neurodynamic_duplex_alternate_convex_search}.

\begin{scriptsize}
\begin{table}[t]
    \hspace{1.0 cm} \begin{tabular}{llllllllllll}
  \hline
  $K$ & & \multicolumn{2}{c}{The neurodynamic Duplex } &&  \multicolumn{2}{c}{ The Alternate Convex Search} && GAP\\
 \cline{3-4}\cline{6-7}
& & Obj value & VS & & Obj value &VS && \\ 
   \cline{3-4}\cline{6-7}
5& &29.03&1 & &31.14& 4&& 6.77\%  \\
   \hline
10 &&50.23&1 &&52.04&3 && 3.47\% \\
   \hline
15&&66.03& 0&&68.66 &4&& 3.82\% \\
   \hline
20 &&123.48&1 &&125.34 &5  &&1.48\%\\
   \hline
30 &&187.55&1 &&189.36 &4  && 0.95\%\\
   \hline
40 &&213.89&2 &&210.43 &4  &&1.61\%\\
   \hline
\end{tabular}
    \caption{The neurodynamic Duplex vs. The Alternate Convex Search }
    \label{tab:neurodynamic_duplex_alternate_convex_search}
\end{table}
\end{scriptsize}

\noindent For the different values of $K$, the GAP between both methods does not
exceed $6\%$. Therefore, they yield similar performance results, as the
differences in constraint violations and objective values are not significant.
The advantage of considering the neurodynamic duplex approach over the CAR
method is that it allows for obtaining solutions to multiple instances of the
same problem without requiring further training. We can solve different
instances of the problem by changing the parameter $\theta$, whereas the CAR
method must be applied to each instance. Once the training of the conditional
recurrent neural networks $\text{RNN}_1$ and $\text{RNN}_2$ in the neurodynamic
duplex has been done, the model can directly output a solution adapted to a
change in the numerical values of $(a_{i,j})_{i,j \in \mathcal{U}, i \neq j}$,
$(b_i)_{i \in \mathcal{U}}$ or $\epsilon$ in (\ref{POJ}), represented by
$\theta$, which is a flexibility that the CAR algorithm could not offer. In
Table \ref{tab:multiple_instances}, we show the computation time for multiple
instances of the problem solved by both methods. We observe that our
neurodynamic duplex is 100 times faster than the alternate convex search when
solving 100 instances of the problem, since solving multiple instances takes the
same CPU time as solving one, whereas the alternate convex search method must be
applied again entirely for each instance.

\begin{scriptsize}
\begin{table}[t]
    \hspace{1.0 cm}
\begin{tabular}{llll}
  \hline
  Number of instances & The neurodynamic Duplex && The Alternate Convex Search\\
 \cline{2-2}\cline{4-4}
1 & 5.11 && 5.03 \\
   \hline
10 & 4.85 && 50.06 \\
   \hline
20 & 5.22 && 100.21 \\
   \hline
50 & 5.06 && 250.40 \\
   \hline
100 & 5.01 && 500.62
\end{tabular}
\caption{Comparison of CPU times, in seconds, between the neurodynamic
Duplex and the Alternate Convex Search on solving multiple instances of
(\ref{POJ}).}
\label{tab:multiple_instances}
\end{table}
\end{scriptsize}


\section{Conclusion}

\noindent This paper studies a distributionally robust joint-constrained
geometric optimization problem for two different moments-based uncertainty sets.
We propose two neurodynamic approaches to solve the resulting optimization
problems. To assess the performances of the proposed approaches, we solve a
problem of shape optimization and a telecommunication problem. We observe that
our approach is robust since it covers well against the fluctuations of the risk
area. The neurodynamic approach exhibits a key advantage in its ability to
converge closer to the optimal solution, whereas the alternate convex search
method only provides an upper bound due to its convex approximation. In our
numerical experiments, we obtain similar results for both methods in terms of
objective value and constraints violations, but the neurodynamic duplex is
significantly faster than the alternate convex search in CPU times for solving
multiple instances of a problem. This distinction underscores the superior
accuracy and effectiveness of our approach.\\
\\
We note that the performances of our approaches can be significantly increased
with the development of new ODE solvers mainly based on machine learning
techniques.

\bibliography{mybibfile}

\end{document}